\definecolor{richblack}{rgb}{0.0, 0.25, 0.25}
\definecolor{darkcerulean}{rgb}{0.03, 0.27, 0.49}
\definecolor{smokyblack}{rgb}{0.06, 0.05, 0.03}
\definecolor{warmblack}{rgb}{0.0, 0.26, 0.26}
\definecolor{cobalt}{rgb}{0.0, 0.28, 0.67}
\definecolor{darkcobalt}{rgb}{0.1, 0.38, 0.77}
\definecolor{ForestGreen}{rgb}{0.0, 0.27, 0.13}
\newcommand{\pushright}[1]{\ifmeasuring@#1\else\omit\hfill$\displaystyle#1$\fi\ignorespaces}
\newcommand{\pushleft}[1]{\ifmeasuring@#1\else\omit$\displaystyle#1$\hfill\fi\ignorespaces}
\renewcommand{\phi}{\varphi}
\newcommand{\eqdef}{\ensuremath{\,\raisebox{-1.2pt}{${\stackrel{\mbox{\upshape \scalebox{.42}{def.}}}{=}}$}}\,}
\newtheorem{proposition}{Proposition}
\newtheorem{assumption}{Assumption}
\newtheorem{theorem}{Theorem}
\NewDocumentCommand{\luca}{mo}{
    \IfValueF{#2}{
    %%%%% NO Margin Note
                        {{\scriptsize
                            \textcolor{green}{ 
                            \textbf{L:}
                            \textit{{#1}}
                            }
                        }}
        }
    %%%% Margin Note
    \IfValueT{#2}{
                        \marginnote{{\scriptsize
                            \textcolor{green}{ 
                            \textbf{L:}
                            \textit{{#1}}
                            }
                        }}
        }
                    }
\NewDocumentCommand{\giulia}{mo}{
    \IfValueF{#2}{
    %%%%% NO Margin Note
                        {{\scriptsize
                            \textcolor{red}{ 
                            \textbf{GL:}
                            \textit{{#1}}
                            }
                        }}
        }
    %%%% Margin Note
    \IfValueT{#2}{
                        \marginnote{{\scriptsize
                            \textcolor{red}{ 
                            \textbf{GL:}
                            \textit{{#1}}
                            }
                        }}
        }
}
\NewDocumentCommand{\anastasis}{mo}{
    \IfValueF{#2}{
    %%%%% NO Margin Note
                        {{\scriptsize
                            \textcolor{violet}{ 
                            \textbf{A:}
                            \textit{{#1}}
                            }
                        }}
        }
    %%%% Margin Note
    \IfValueT{#2}{
                        \marginnote{{\scriptsize
                            \textcolor{violet}{ 
                            \textbf{A:}
                            \textit{{#1}}
                            }
                        }}
        }
                    }
\NewDocumentCommand{\cody}{mo}{
    \IfValueF{#2}{
    %%%%% NO Margin Note
                        {{\scriptsize
                            \textcolor{orange}{ 
                            \textbf{A:}
                            \textit{{#1}}
                            }
                        }}
        }
    %%%% Margin Note
    \IfValueT{#2}{
                        \marginnote{{\scriptsize
                            \textcolor{orange}{ 
                            \textbf{A:}
                            \textit{{#1}}
                            }
                        }}
        }
                    }
\NewDocumentCommand{\yannick}{mo}{
    \IfValueF{#2}{
    %%%%% NO Margin Note
                        {{\scriptsize
                            \textcolor{cyan}{ 
                            \textbf{Y:}
                            \textit{{#1}}
                            }
                        }}
        }
    %%%% Margin Note
    \IfValueT{#2}{
                        \marginnote{{\scriptsize
                            \textcolor{cyan}{ 
                            \textbf{Y:}
                            \textit{{#1}}
                            }
                        }}
        }
                    } 
\definecolor{darkgreen}{rgb}{0.0, 0.2, 0.13}
\NewDocumentCommand{\xuwei}{mo}{
    \IfValueF{#2}{
    %%%%% NO Margin Note
                        {{\scriptsize
                            \textcolor{darkgreen}{ 
                            \textbf{X:}
                            \textit{{#1}}
                            }
                        }}
        }
    %%%% Margin Note
    \IfValueT{#2}{
                        \marginnote{{\scriptsize
                            \textcolor{darkgreen}{ 
                            \textbf{X:}
                            \textit{{#1}}
                            }
                        }}
        }
                    }
\newcounter{termcounter}
\renewcommand{\thetermcounter}{\Roman{termcounter}}
\crefname{term}{term}{terms}
\def\term{\@ifnextchar[\term@optarg\term@noarg}%]
\def\term@optarg[#1]#2{%
  \textup{#1}%
  \def\@currentlabel{#1}%
  \def\cref@currentlabel{[][2147483647][]#1}%
  \cref@label[term]{#2}}
\def\term@noarg#1{%
  \refstepcounter{termcounter}%
  \textup{(\thetermcounter)}%
  \cref@label[term]{#1}}
\crefname{lemma}{lemma}{lemmata}
\Crefname{lemma}{Lemma}{Lemmata}
\crefname{notation}{notation}{notations}
\Crefname{notation}{Notation}{Notations}
\crefname{assumption}{assumption}{assumptions}
\crefname{example}{Example}{Examples}
\crefname{proposition}{Proposition}{Proposition}
\icmltitlerunning{LoRA Fine-Tuning Without GPUs: A CPU-Efficient Meta-Generation Framework for LLMs}
\begin{document}

\twocolumn[
\icmltitle{LoRA Fine-Tuning Without GPUs: \\ A CPU-Efficient Meta-Generation Framework for LLMs}

% It is OKAY to include author information, even for blind
% submissions: the style file will automatically remove it for you
% unless you've provided the [accepted] option to the icml2021
% package.

% List of affiliations: The first argument should be a (short)
% identifier you will use later to specify author affiliations
% Academic affiliations should list Department, University, City, Region, Country
% Industry affiliations should list Company, City, Region, Country

% You can specify symbols, otherwise they are numbered in order.
% Ideally, you should not use this facility. Affiliations will be numbered
% in order of appearance and this is the preferred way.
\icmlsetsymbol{equal}{*}

\begin{icmlauthorlist}
\icmlauthor{Reza Arabpour}{McMaster,Vector}
\icmlauthor{Haitz Sáez de Ocáriz Borde}{Oxford}
\icmlauthor{Anastasis Kratsios}{McMaster,Vector}
\end{icmlauthorlist}

\icmlaffiliation{McMaster}{Department of Mathematics, McMaster University, Hamilton, Canada}
\icmlaffiliation{Vector}{Vector Institute, Toronto, Canada}
\icmlaffiliation{Oxford}{University of Oxford, Oxford, United Kingdom}

\icmlcorrespondingauthor{Reza Arabpour}{arabpour@mcmaster.ca}
\icmlcorrespondingauthor{Anastasis Kratsios}{kratsioa@mcmaster.ca}
\icmlcorrespondingauthor{Haitz Sáez de Ocáriz Borde}{chri6704@ox.ac.uk}

% You may provide any keywords that you
% find helpful for describing your paper; these are used to populate
% the "keywords" metadata in the PDF but will not be shown in the document
\icmlkeywords{Machine Learning, ICML}

\vskip 0.3in
]

% this must go after the closing bracket ] following \twocolumn[ ...

% This command actually creates the footnote in the first column
% listing the affiliations and the copyright notice.
% The command takes one argument, which is text to display at the start of the footnote.
% The \icmlEqualContribution command is standard text for equal contribution.
% Remove it (just {}) if you do not need this facility.

%\printAffiliationsAndNotice{}  % leave blank if no need to mention equal contribution
\printAffiliationsAndNotice{\icmlEqualContribution} % otherwise use the standard text.

\begin{abstract}
Low-Rank Adapters (LoRAs) have transformed the fine-tuning of Large Language Models (LLMs) by enabling parameter-efficient updates. However, their widespread adoption remains limited by the reliance on GPU-based training. In this work, we propose a theoretically grounded approach to LoRA fine-tuning designed specifically for users with limited computational resources, particularly those restricted to standard laptop CPUs. Our method learns a meta-operator that maps any input dataset, represented as a probability distribution, to a set of LoRA weights by leveraging a large bank of pre-trained adapters for the Mistral-7B-Instruct-v0.2 model. Instead of performing new gradient-based updates, our pipeline constructs adapters via lightweight combinations of existing LoRAs directly on CPU. While the resulting adapters do not match the performance of GPU-trained counterparts, they consistently outperform the base Mistral model on downstream tasks, offering a practical and accessible alternative to traditional GPU-based fine-tuning.
\end{abstract}

\section{Introduction}

As models and datasets scale up, full fine-tuning becomes increasingly unrealistic for most practitioners. The largest foundation models—often built by tech giants with almost unlimited compute~\citep{touvron2023,openai2023,bai2023qwentechnicalreport,qwen2025qwen25technicalreport,deepseekai2025deepseekv3technicalreport}—can have hundreds of billions of parameters, making traditional fine-tuning for individuals prohibitively expensive. Parameter-efficient fine-tuning (PEFT) methods~\citep{he2022towards, pfeiffer-etal-2020-adapterhub, ding2022deltatuningcomprehensivestudy, yu2022vpetl, han2024parameterefficientfinetuninglargemodels} offer a workaround: instead of updating all weights, they tweak a small subset, slashing compute and storage costs while maintaining reasonable performance. Among these, the Low-Rank Adapter (LoRA)~\citep{hu2021loralowrankadaptationlarge} approach has become standard due to combined simplicity and surprisingly powerful effectiveness. Nevertheless, for modern massive LLMs, LoRA fine-tuning can still be long and heavy. Thus, the following question becomes necessary:

\vspace{-0.5em}

\begin{center}
\textit{Can one generate new low-rank adapters to fine-tune large language models on new tasks without the need for GPUs?}  
\end{center}

\vspace{-0.5em}

We address this concern by introducing a \textit{zero-shot LoRA meta-generation procedure aimed at CPU-only users}. Our approach takes novel datasets, each potentially containing a variable number of instances, as input. It then outputs LoRA weights for a pre-trained LLM, where the prediction relies on a combination of instances from an existing bank of LoRAs~\citep{gabrielsson2024compress}. Importantly, the way in which these combinations are performed is lightweight enough to be computable on a standard contemporary CPU in a few minutes (see Table~\ref{table:timing} in Appendix~\ref{s:Implementation_Details}), with no need for GPU clusters.
% This bank comprises LoRAs with identical structures, each associated with a diverse set of previously known training tasks.

\paragraph{Main Contribution}
Our principled LoRA meta-generation pipeline provides lightweight, ``cheap'' LoRAs that approach the performance of GPU-fine-tuned models (which are often inaccessible to many) and outperform the base ``non-finetuned'' model. These contributions are theoretically grounded in Proposition~\ref{prop:PAC} and Theorem~\ref{thrm:Main}. Together, these demonstrate that, with high probability, a ReLU Multi-Layer Perceptron (MLP) architecture, designed to run efficiently on a CPU, can identify the optimal coefficients for combining existing LoRAs. These optimal LoRA mixture coefficients, as defined in~\eqref{eq:predicted_mixture}  (representing a weighted sum of pre-trained LoRA parameters), are determined based on the given dataset alignment features. This process effectively minimizes the downstream task loss, which quantifies the model's error on new, specific tasks. Additionally, our work also provides nearly-optimal closed-form solutions through lightweight, neural network-free alternatives (e.g., the Attentional or Normalized approaches). Interestingly, our experiments reveal that the neural network-free variants of our pipeline perform comparably to the theoretically near-optimal neural network solution (the MLP-based approach).  

Section~\ref{sec:Related Work} provides a discussion of related work concerning LoRA. We introduce the preliminaries for formalizing datasets as probability distributions in Section~\ref{sec:Preliminaries}. Section~\ref{s:Pipelines} presents our LoRA generation pipelines. Their respective theoretical guarantees are later detailed in Section~\ref{s:Main_Result}, and experimentally validated in Section~\ref{s:Experiments}.

\section{Related Work}
\label{sec:Related Work}

Since its introduction, the utility of LoRA~\citep{hu2021loralowrankadaptationlarge} has expanded significantly beyond classical LLM post-training and language. It is now employed in diverse fields such as vision language models~\citep{li2023graphadapter} and vision Transformers~\citep{dong2023efficient}. LoRA has also proven valuable in image generative modeling for rapid Stable Diffusion fine-tuning and personalization~\citep{rombach2022highresolutionimagesynthesislatent,gal2022image,ruiz2022dreambooth,roich2022pivotal}, and for score distillation~\citep{wang2023prolificdreamer}, although more principled LoRA-free methods have recently emerged~\citep{lukoianov2024score}. Its application even extends to fine-tuning base models into reasoning models using reinforcement learning~\citep{wang2025tinatinyreasoningmodels}, and in the development of new adapters for Graph Neural Networks and Graph Transformers~\citep{papageorgiou2025graph}.

Alongside this expanding applicability, numerous LoRA variants have emerged, often aiming to further reduce computational overhead. For instance, quantization offers a way to lower memory consumption both during training~\citep{gholami2021survey_quantization,Dettmers2023QLoRAEF,guo2024lqLoRA} and after~\citep{yadav2023compeft}. The number of trainable parameters can also be reduced through adaptive rank allocation~\citep{zhang2023adaLoRA}. Further inspired by ideas around weight or projection reuse~\citep{frankle2018lottery,Ramanujan_2020_random_iccv}, strategies to decrease trainable LoRA parameters include learning diagonal rescaling of frozen random $B$ and $A$ matrices (VeRA)~\citep{kopiczko2024vera}, deriving $B$ and $A$ from the SVD of the pre-trained $W_0$ and optimizing a smaller matrix in the resulting space (SVDiff)~\citep{han2023svdiff}, learning a linear combination of fixed random matrices (NOLA)~\citep{koohpayegani2023nola}, and fine-tuning with orthogonal matrices (BOFT)~\citep{liu2024boft}. LoRAs have also been explored from a more theoretical viewpoint~\cite{zeng2024the,zhu2024asymmetry,kratsios2025sharpgeneralizationboundsfoundation}.

Note that our focus here is on LoRA generation on CPU, which none of the aforementioned works explore. We would like to reiterate that all our pipelines, \textit{including those using artificial neural networks} can be trained solely using CPUs.

\section{Preliminaries}
\label{sec:Preliminaries}

\paragraph{Datasets as Probability Distributions}
To describe our pipeline, we first need a unified framework for datasets with a varying number of instances. As such, we fix dimensions $d,D\in \mathbb{N}_+$.  
Given our training datasets $D_1,\dots,D_N\subset \mathcal{X}$ for some (non-empty) compact input domain $\mathcal{X}\subseteq \mathbb{R}^{d+D}$ corresponding to one of $N$ possible down-stream tasks $\mathcal{T}_1,\dots,\mathcal{T}_N$ which our Transformer model (Mistral-7B-Instruct-v0.2) $f_{\theta}:\mathbb{R}^d\to \mathbb{R}^D$, whose parameters $\theta\in \mathbb{R}^p$ lie in a $p\gg0$ dimensional Euclidean parameter space.  Since the entries of each dataset are permutation-invariant, then, following the synthetic data generation literature, e.g.~\cite{zamanlooy2024locally}, 
it is natural to represent each dataset $D_n$ as an empirical distribution (probability measure) via

\begin{equation}
\label{eq:conversion_Data2Measures}
    P_{D_n}
=
    \frac1{N_m}\,\sum_{(x,y)\in D_m} \delta_{(x,y)}
% =
%     \sum_{m=1}^{N_m}\,w_m \delta_{(x_m,y_m)}
\end{equation}
on the domain $\mathcal{X}$ where $N_m\eqdef \#D_n$; i.e.\ $P_{D_n}=
    \sum_{m=1}^{N_m}\,w_m \delta_{(x_m,y_m)}$ with $w_m=1/{N_m}$ for each $m=1,\dots,N_m$.
The support of the measure $P_{D_n}$, namely, $\{(x_1,y_1),\dots, (x_m,y_m)\}$ represent instances in $D_n$ and the weights $w_m\in [0,1]$ sum to $1$, i.e.\ $w$ belongs to the $N_m$ simplex $\Delta_{N_m}\eqdef \{u\in [0,1]^{N_m}:\, \sum_{i=1}^{N_m}\,u_i=1\}$, and represent the relative frequency of instance of data-point in $D_m$.   We denote the set of probability measures on $\mathcal{X}$ by $\mathcal{P}(\mathcal{X})$.

% \begin{figure}%[H]
%     \centering
%     \includegraphics[width=1\linewidth]{Images/FoundLoRA.pdf}
%     \caption{\textbf{LoRA generation pipeline} 1) Compute LoRA-Alignment Scores: from distance vector of new dataset $D$ to each available datasets $D_1,\dots,D_K$, 2) transform alignment scores with simple MLP, use alignment scores to obtain a cheap mixture of pre-trained LoRA experts adapted to dataset $D$.  }
%     \label{fig:Pipeline}
% \end{figure}

\vspace{-0.5em}

\paragraph{Pipeline Inputs and  Distributional Alignment Scores}
We then choose a \textit{data-similarity score} 
where $
\rho
% \operatorname{align}
:\mathcal{P}(\mathcal{X})\times \mathcal{P}(\mathcal{X})\to [0,\infty]$. For this, we choose a (dis)similarity metric between probability distributions (measures) on $\mathcal{X}$, e.g.\ an information-theoretic divergence such as Kullback Leibler (KL) divergence or a metric such as the $1$-Wasserstein distance $\mathcal{W}_1$.  This dissimilarity score then allows us to extract \textit{alignment scores} between any new dataset $D$ (encoded as a probability measure $P_D$ on the data-domain $\mathcal{X}$) and every dataset $(D_n)_{n=1}^N$ in our database via $\operatorname{align}: \mathcal{P}(\mathcal{X})  \rightarrow \Delta_N$
\vspace{-1em}
\begin{equation}
\label{eq:aligment_score}
\begin{aligned}
    % \operatorname{align}: \mathcal{P}(\mathcal{X}) & \rightarrow \Delta_N
    % \\
    % P_D & \mapsto 
    \operatorname{align}(P_D)
    \eqdef
    \operatorname{Softmax}\big(
        (
        % \operatorname{align}
        \rho
        (P_D,P_{D_n})_{n=1}^N
    \big)
.
\end{aligned}
\end{equation}
Once the (softmax-normalized) alignment scores are computed, they are passed to a network, here in our ``proof of concept'' we use a simple MLP (trained on CPU), which yields a set of \textit{mixture weights} $W_D\in\Delta_N$.  
These mixture weights are then used to combine the pre-trained LoRA weights $\theta_1,\dots,\theta_N$, from out database. Note that each LoRA weight $\theta_n$ was specialized for task $\mathcal{T}_n$ and pre-trained on dataset $D_n$.  The output of our model is thus simply the mixture of LoRAs
\vspace{-1em}
\begin{equation}
\label{eq:predicted_mixture}
D\mapsto P_D\mapsto \sum_{n=1}^N\, W_D\, \theta_n
\end{equation}
and lies in the \textit{convex hull} of the pre-trained LoRA weights $\theta_1,\dots,\theta_N$ in the parameter space $\mathbb{R}^p$. Therefore, we only need to learn (or compute, as we will see in Section~\ref{s:Pipelines}) the mapping in~\eqref{eq:predicted_mixture}. Based on this we are able to obtain LoRA weights with no fine-tuning, directly \textit{out-of-the-box}.

\section{LoRA Generation Pipelines for CPU}
\label{s:Pipelines}

We now mathematically formalize our end-to-end cheap LoRA pipelines.  Further details on how these were practically implemented can be found in Appendix~\ref{s:Implementation_Details__ss:Further_MethDetails}. Our main theoretical guarantee (Theorem~\ref{thrm:Main}) is general enough to apply not only to LoRAs fed into transformers but also to nearly any mixture-of-expert-based parameter prediction pipeline. 

\subsection{Setup}

Let $d,D\in\mathbb{N}_+$.
Let $\ell:\mathbb{R}^D\times \mathbb{R}^D\to [0,\infty)$ be Lipschitz. Let $f:\mathbb{R}^p\times \mathbb{R}^d\to \mathbb{R}^D$ be a locally-Lipschitz model, mapping the parameters $\theta \in \mathbb{R}^p$ and an input $x\in \mathbb{R}^d$ to an output $f_{\theta}(x)\in \mathbb{R}^D$.  Also, We are given a pre-trained model $\theta_0\in \mathbb{R}^p$.
% 
% Following~\citep{furuya2024transformers,zamanlooy2024locally}, we interpret any dataset $((x_1,y_1),\dots,(x_M,y_M))\in [0,1]^{M(d+D)}$ a dataset as a probability measure $\frac1{N}\sum_{n=1}^N\, \frac{w_n}{M}\delta_{(x'_n,y'_n)}$ where $w_1,\dots,w_M\in \{1,\dots,M\}$, $\sum_{n=1}^N\,w_n=1$, and $\{(x'_n,y'_n)\}_{n=1}^N=\{(x_m,y_m)\}_{m=1}^M$ (i.e.\ we allow for repeated entries and any one instance).  More generally, we interpret any probability measure $\mathcal{S}\in \mathcal{P}([0,1]^{d+D})$ are a, possibly infinite, dataset.  
Purely for simplicity, we consider the standardized data-domain $\mathcal{X}=[0,1]^{d+D}$. Following~\citep{rothfuss2023scalable}.  We henceforth fix a \textit{task distribution} $\mathbb{P}\in \mathcal{P}(\mathcal{S})$ quantifying the probability of selecting any one dataset in $\mathcal{S}$ at random. We consider a metric space of datasets $\mathcal{D}\subseteq \mathcal{P}([0,1]^{d+D})$ metrized by $\rho$, where the topology generated by $\rho$ is no coarser than the topology of convergence in distribution.  We fix a $K\in \mathbb{N}_+$ datasets paired with ``fine-tuned'' model parameters $(D_1,\Delta \theta_1),\dots (D_K,\Delta \theta_K)$ in $\mathcal{D}\times \mathbb{R}^p$.  Let 

\vspace{-0.5em}
\begin{small}
\[
        \operatorname{co}(\Delta \theta)
    \eqdef 
    \{
        \vartheta\in \mathbb{R}^p:\, (\exists w\in \Delta_K)\, \vartheta = \sum_{k=1}^K\, w_k \Delta \theta_K
    \}
\]
\end{small}

where $\Delta_K\eqdef \{w\in [0,1]^K:\, \sum_{k=1}^K\, w_k=1\}$.

\vspace{-0.5em}

\subsection{Very-Cheap LoRAs: Attentional Approach}
Consider the following approach which maps any new incoming dataset $D$ to the following mixture of LoRAs
\begin{equation}
\label{eq:VeryCheep}
    \mathcal{C}_{\text{Att}}(D)
    \eqdef 
    \underbrace{
            [\operatorname{softmin}\circ \operatorname{align}(D)]^{\top}
        }_{\text{LoRA Alignment Scores}}
        \underbrace{
            (\Delta \theta_1,\dots,\Delta \theta_K)
        }_{\text{Pre-Trained LoRAs}}
\end{equation}

We refer to the pipeline in~\eqref{eq:VeryCheep} as our \textit{attentional approach} since the dataset $D_1,\dots,D_K$ play a similar role to the \textit{keys} in attention mechanisms~\cite{vaswani2017attention}. The LoRA alignment scores in~\eqref{eq:VeryCheep} are analogous to contextual \textit{alignment scores}, and the pre-trained LoRA parameters play a similar role to the \textit{value matrices} in~\cite{vaswani2017attention}.   The softmin is used instead of a softmax since maximal distance alignment happens when two datasets have a distance of $0$ from one another, not some arbitrarily large number.
We examine a \textit{normalized} version of distance vector~\eqref{eq:VeryCheep} in our experiments, see Appendix~\ref{sss:Normalized} for details.

\vspace{-0.5em}

\subsection{Cheap Nearly-Optimal LoRAs: Neural Approach}
Our neural approach injects non-linear flexibility into how the distances are mapped to the LoRA alignment scores in~\eqref{eq:VeryCheep} using a deep learning model $\mathcal{C}: \mathcal{D} \rightarrow \operatorname{co}(\Delta \theta)$; in this paper, this will always be an MLP.  This allows our cheap LoRA approach to learn how to detect and align complicated non-linear alignments between the new dataset and those defining each pre-trained task.  This \textit{neural} approach thus sends any dataset $D$ to the following mixture of LoRAs
\begin{equation}
\label{eq:Neural}
\begin{aligned}
    % \mathcal{C}: \mathcal{D} & \rightarrow \operatorname{co}(\Delta \theta)
% \\
        \mathcal{C}(
            D
        )
    % & \mapsto
    \eqdef 
        \underbrace{
            [\operatorname{softmin}\circ \hat{f}\circ \operatorname{align}(D)]^{\top}
        }_{\text{Neural-LoRA Alignment Scores}}
        \underbrace{
            (\Delta \theta_1,\dots,\Delta \theta_K)
        }_{\text{Pre-Trained LoRAs}}
\end{aligned}
\end{equation}
where $\hat{f}:\mathbb{R}^K\to \mathbb{R}^K$ is an MLP with activation function $\varsigma$, and we write $\operatorname{align}(D)$ in place of $\operatorname{align}(P_D)$ understanding the correspondence $D\rightarrow P_D$ as implicit.

\section{Theoretical Guarantees}
\label{s:Main_Result}

We now provide guarantees on the optimality of both our main approaches.  We also demonstrate the existence of an oracle optimizer, yielding the best possible LoRA if the user had access to complete information on the task distribution.

\vspace{-0.5em}

\subsection{Attentional Approach}

Our cheapest out-of-the-box LoRA pipeline~\eqref{eq:VeryCheep} are optimal in a PAC-Bayesian sense of~\cite{alquier2016properties}.
\begin{proposition}[\small{Existence: Optimal Oracles for Fine-Tuning}]
\label{prop:PAC}
For every $K\in \mathbb{N}_+$ and $\{(D_k,\Delta \theta_k)\}_{k=1}^K \subset \mathcal{D}\times \mathbb{R}^p$ with each $D_k$ finite and non-empty.  For every  $\alpha>0$ and each dataset $D\in \mathcal{D}$, the LoRA Alignment Scores in~\eqref{eq:VeryCheep} satisfy
\vspace{-0.5em}
\begin{equation*}
\resizebox{0.91\hsize}{!}{$
        % \mathcal{C}_{\text{Att}}(D)
        \underbrace{
            \operatorname{softmin}\circ \operatorname{align}(D)
        }_{\text{LoRA Alignment Scores}}
    \in
        \operatorname{argmin}_{w\in \Delta_k}
        \underbrace{
            \frac1{K}\sum_{k=1}^K\,
                w_k\, \rho(D,D_k)
        }_{\text{Dataset Alignment}}
% \inf_{w\in \Delta_k}
% \frac1{K}\sum_{k=1}^K\, 
%     w_k \sum_{(X,Y)\in D_k}\,
%     \ell(
%         f_{\theta +\Delta\theta_k} (X),Y
%     )
+
    \underbrace{
        \frac1{\alpha}\,
        \sum_{k=1}^K\, w_k\log(w_k)
    }_{\text{Entropic Penalty}}
$}
\end{equation*}
\end{proposition}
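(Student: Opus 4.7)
The proposition is an instance of the classical Gibbs variational principle (entropy-regularized linear program on the simplex). The plan is to recognize the right-hand side as exactly such a program, solve it in closed form via Lagrange multipliers, and then match the resulting Gibbs distribution to $\operatorname{softmin}\circ\operatorname{align}(D)$.

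\textbf{Step 1: Reduction to a Gibbs variational problem.} Fix $D\in\mathcal{D}$ and set $a_k\eqdef \rho(D,D_k)/K$ for $k=1,\dots,K$. The objective
\[
    F(w)\eqdef \sum_{k=1}^K w_k a_k + \frac{1}{\alpha}\sum_{k=1}^K w_k\log w_k
\]
is the sum of a linear functional and the negative Shannon entropy scaled by $1/\alpha>0$. Since $w\mapsto \sum_k w_k\log w_k$ is strictly convex on $\Delta_K$ (with the usual $0\log 0=0$ convention) and the linear part is convex, $F$ is strictly convex and continuous on the compact convex set $\Delta_K$, hence it admits a unique minimizer $w^\ast$. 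Because each $D_k$ is non-empty and finite, $\rho(D,D_k)<\infty$, so $F$ is finite-valued, ensuring $w^\ast$ lies in the relative interior of $\Delta_K$.

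\textbf{Step 2: Closed-form minimizer by Lagrangian KKT.} Form the Lagrangian $\mathcal{L}(w,\mu)=F(w)+\mu(\sum_k w_k-1)$, noting that the non-negativity constraints are inactive in the relative interior. Setting $\partial_{w_k}\mathcal{L}=a_k+\tfrac{1}{\alpha}(\log w_k+1)+\mu=0$ and solving gives $w^\ast_k\propto e^{-\alpha a_k}$; imposing $\sum_k w^\ast_k=1$ yields the Gibbs distribution
\[
    w^\ast_k \;=\; \frac{e^{-\alpha\,\rho(D,D_k)/K}}{\sum_{j=1}^K e^{-\alpha\,\rho(D,D_j)/K}},
    \qquad k=1,\dots,K.
\]
This is the standard Donsker--Varadhan / Boltzmann--Gibbs computation; strict convexity of $F$ certifies that $w^\ast$ is indeed the unique argmin.

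\textbf{Step 3: Matching with $\operatorname{softmin}\circ \operatorname{align}(D)$.} By definition,
$\operatorname{align}(D)$ is a normalized (softmax) encoding of the distance vector $(\rho(D,D_k))_{k=1}^K$, and composing with $\operatorname{softmin}$ produces a Boltzmann distribution where small distances receive large mass. One unwraps the definitions to show that this composition equals, up to the rescaling absorbed by $\alpha$ and $K$, precisely the Gibbs weights $w^\ast$ from Step~2. Since softmax/softmin are invariant under additive shifts of their arguments, the normalization constants in $\operatorname{align}$ do not affect the final distribution, so the two expressions coincide for the choice of $\alpha$ implicit in the statement.

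\textbf{Main obstacle.} The non-trivial part is not the optimization (which is textbook Gibbs), but the bookkeeping in Step~3: one must carefully verify that $\operatorname{softmin}\circ\operatorname{Softmax}$ applied to the raw distances reduces to a single-exponential Gibbs form with effective inverse temperature $\alpha/K$. This requires exploiting the shift-invariance of $\operatorname{softmin}$ and checking that the outer normalization exactly cancels the inner softmax denominator. Everything else (existence, uniqueness, closed form) follows from strict convexity and standard KKT arguments.
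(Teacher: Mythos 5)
Your Steps~1--2 take essentially the same route as the paper, but you derive the Gibbs minimizer from first principles via Lagrangian/KKT whereas the paper simply cites an existing result (Proposition~1 of Wang--Hyndman--Kratsios) for the closed form. The identification of the problem as an entropy-regularized linear program on the simplex, the strict convexity argument for existence and uniqueness, and the resulting Boltzmann form are all correct and exactly what the paper relies on; your version is self-contained and, in that sense, more informative.

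Step~3 contains a genuine gap. You are right to flag it as the hard part, but the resolution you propose does not work. If $\operatorname{align}$ is taken as in~\eqref{eq:aligment_score}, i.e.\ $\operatorname{align}(P_D)=\operatorname{Softmax}\big((\rho(P_D,P_{D_k}))_{k=1}^K\big)$, then $\operatorname{softmin}\circ\operatorname{align}(D)$ is a \emph{double} exponential of the distances, $\frac{e^{-e^{\rho(D,D_k)}/Z}}{\sum_j e^{-e^{\rho(D,D_j)}/Z}}$ with $Z=\sum_j e^{\rho(D,D_j)}$, and this is not a Gibbs distribution in $(\rho(D,D_k))_k$; the shift-invariance of $\operatorname{softmax}$/$\operatorname{softmin}$ does not help because the inner $\operatorname{Softmax}$ is a multiplicative normalization, not an additive shift. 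What actually saves the proposition is that the paper quietly overrides~\eqref{eq:aligment_score} in Assumption~\ref{ass}, where $\operatorname{align}(D)\eqdef\big(\rho(D,D_k)\big)_{k=1}^K$ with no inner softmax, and the proof in Appendix~\ref{s:Proofs__ss:PAC} uses that definition. With the Assumption~\ref{ass} convention, $\operatorname{softmin}\circ\operatorname{align}(D)$ \emph{is} directly the Gibbs vector and Step~3 collapses to a one-line identification; you had no way of knowing this from the statement alone.

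Separately, your more careful KKT computation exposes a real defect in the proposition as stated that the paper's proof glosses over. You correctly obtain $w^{\ast}_k \propto e^{-\alpha\rho(D,D_k)/K}$, so the unique minimizer depends on the ratio $\alpha/K$, whereas $\operatorname{softmin}\circ\operatorname{align}(D)$ is the fixed vector $\propto e^{-\rho(D,D_k)}$. These coincide only when $\alpha=K$ (or if the softmin in~\eqref{eq:VeryCheep} carried an implicit inverse-temperature $\alpha/K$, which it does not). The paper's proof cites the Gibbs formula without tracking the $\alpha$ and $1/K$ constants and so asserts the conclusion ``for every $\alpha>0$,'' which is not literally true. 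Your phrase ``for the choice of $\alpha$ implicit in the statement'' is an honest acknowledgment of this mismatch, but it should be made explicit: the argument only delivers the claimed containment at $\alpha=K$. This is a worthwhile correction to raise rather than a flaw in your reasoning.
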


\begin{proof}
See Appendix~\ref{s:Proofs__ss:PAC}.
\end{proof}

\vspace{-1em}

\subsection{Neural Approach}
The attentional pipeline, in~\eqref{eq:VeryCheep}, only checks for the \textit{alignment} of a dataset with the datasets previously used for training the adapters in the bank. In contrast our neural approach, in~\eqref{eq:Neural}, optimizes for the \textit{downstream performance} of the predicted mixture of LoRA experts.  Surprisingly, at least theoretically, one only needs a small MLP between the distance vector and softmin normalization layers to perform this out-of-the-box downstream (near) optimal LoRA generation. Our first guarantee for the neural approach demonstrates the existence of a map, i.e., an oracle predictor, which returns the best possible downstream optimization.

\begin{proposition}[\small{Existence: Optimal Oracles for Fine-Tuning}]
\label{prop:Existence}
For every dataset $D\in \mathcal{D}$ there exists an \textit{oracle} parameter $\vartheta^{\star}\in \operatorname{co}(\Delta \theta)$ satisfying
\vspace{-0.5em}
\begin{equation*}
% \begin{multline}
\resizebox{0.91\hsize}{!}{$%
\underbrace{
        \mathbb{E}_{(X,Y)\sim \mathcal{D}}\Big[
            \ell(f_{\theta + \vartheta^{\star}}(X),Y)
        \Big]
}_{\text{Oracle Error}} = 
% \\
\underbrace{
\inf_{\Delta \theta \in \operatorname{co}(\Delta \theta)}\,
        \mathbb{E}_{(X,Y)\sim \mathcal{D}}\Big[
            \ell(f_{\theta + \Delta \theta}(X),Y)
        \Big].
}_{\text{Optimal Error}}
$}
% \end{multline}
\end{equation*}
\end{proposition}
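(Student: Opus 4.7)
The plan is to recognize this statement as a standard Weierstrass-style existence result: the infimum is taken over a compact subset of $\mathbb{R}^p$ and the objective is a continuous function on it, so the infimum is attained. First, I would observe that $\operatorname{co}(\Delta\theta) = \operatorname{co}\{\Delta\theta_1,\dots,\Delta\theta_K\}$ is the image of the compact simplex $\Delta_K\subset\mathbb{R}^K$ under the continuous linear map $w \mapsto \sum_{k=1}^K w_k\,\Delta\theta_k$, hence compact in $\mathbb{R}^p$. Define the risk functional $F:\operatorname{co}(\Delta\theta)\to[0,\infty)$ by
\[
F(\vartheta) \eqdef \mathbb{E}_{(X,Y)\sim \mathcal{D}}\bigl[\ell(f_{\theta+\vartheta}(X),Y)\bigr].
\]
The main work is to show that $F$ is continuous on this compact set; the conclusion then follows from the extreme value theorem.

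For the continuity of $F$, I would argue in two steps. For each fixed $(x,y)$, the map $\vartheta \mapsto \ell(f_{\theta+\vartheta}(x),y)$ is continuous, since the local Lipschitz continuity of $f$ in its parameters (in particular, continuity in $\theta$) composes with the (global) Lipschitz continuity of $\ell$. To pass continuity through the expectation, I would apply the dominated convergence theorem, for which a uniform integrable envelope is needed. Here, the key observation is that the compact set
\[
K_{\star} \eqdef \{\theta+\vartheta : \vartheta\in\operatorname{co}(\Delta\theta)\}\times[0,1]^{d}\subset\mathbb{R}^{p+d}
\]
is covered by finitely many balls on which $f$ is Lipschitz, so $f$ is bounded on $K_{\star}$ by some constant $M$. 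Combining this with the Lipschitz property of $\ell$ yields a uniform bound $|\ell(f_{\theta+\vartheta}(x),y)|\le C$ valid for all $(\vartheta,x,y)\in\operatorname{co}(\Delta\theta)\times[0,1]^{d+D}$, which serves as the required dominating function since the underlying probability measure on $[0,1]^{d+D}$ has total mass one.

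With $F$ shown to be continuous on the nonempty compact set $\operatorname{co}(\Delta\theta)$, the extreme value theorem delivers a minimizer $\vartheta^{\star}\in\operatorname{co}(\Delta\theta)$ with $F(\vartheta^{\star})=\inf_{\Delta\theta\in\operatorname{co}(\Delta\theta)} F(\Delta\theta)$, which is precisely the claimed identity. The main obstacle I anticipate is justifying the uniform envelope used in dominated convergence from only a \emph{local} Lipschitz hypothesis on $f$; this reduces to the elementary fact that any locally Lipschitz function is bounded (indeed, globally Lipschitz) on any compact set, via a finite subcover argument applied to $K_{\star}$. Everything else is standard topology once the compactness of $\operatorname{co}(\Delta\theta)$ is in hand.
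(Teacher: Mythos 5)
Your proof is correct, and it takes a genuinely more elementary route than the paper's. You prove exactly what the Proposition asks for---pointwise existence of a minimizer for each fixed $D$---by noting that $\operatorname{co}(\Delta\theta)$ is compact (image of $\Delta_K$ under a continuous linear map) and that the risk $F$ is continuous on it (via dominated convergence, using that a locally Lipschitz $f$ is bounded on the compact set $\{\theta+\vartheta:\vartheta\in\operatorname{co}(\Delta\theta)\}\times[0,1]^d$), then invoking the extreme value theorem. The paper instead proves Proposition~\ref{prop:Existence} and Theorem~\ref{thrm:Main} in one pass by invoking the Measurable Maximum Theorem (Aliprantis--Border, Thm.\ 18.19) for the constant correspondence $D\twoheadrightarrow\operatorname{co}(\Delta\theta)$: they verify that the objective $L(D,\vartheta)=\mathbb{E}_{(X,Y)\sim D}[\ell(f_{\theta+\vartheta}(X),Y)]$ is Carath\'eodory (continuity in $\vartheta$ via a uniform-Lipschitz estimate and Jensen's inequality; continuity in $D$ via the Portmanteau theorem under the weak topology), which yields not just a minimizer for each $D$ but a \emph{Borel-measurable selector} $D\mapsto\vartheta^{\star}(D)$. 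That extra measurability is exactly what their Step~2 onward needs for Theorem~\ref{thrm:Main}, but it is superfluous for the Proposition alone; your Weierstrass argument buys a shorter, self-contained proof of the existence statement without appealing to selection theorems, at the cost of not being reusable for the measurable-selector step. One small remark: your domination argument does in effect what the paper's Jensen's-inequality step does, and both implicitly rely on the fact that the local Lipschitz constant of $\vartheta\mapsto f_{\theta+\vartheta}(x)$ can be taken uniformly over the compact $\{\theta+\vartheta\}\times[0,1]^d$---you make this explicit with the finite-subcover observation, which is a nice clarification that the paper leaves implicit.
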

\begin{proof}
See Appendix~\ref{s:Proofs__ssTHRMMAIN}.
\end{proof}

\vspace{-0.5em}

Our next and main results show that our pipeline can implement the optimal downstream mixture of LoRA predictors to achieve precision.  Our result only relies on one structural regularity condition on our data, guaranteeing that: the \textit{inverse problem }of recording a dataset/measure from its distance measurements to the available datasets/measures is possible.  Effectively, this means that the metric dimension, in the graph-theoretic sense (see~\cite{tillquist2023getting} for details), of the space $\mathcal{D}$ is exactly $K$.
\begin{assumption}[Well-Posed Inverse Problem]
\label{ass}
Let $(\mathcal{D},\rho)$ be compact and suppose that $\rho$ metrizes the weak topology (convergence in distribution) on $\mathcal{D}$.
We require that: the map $\operatorname{align}:\mathcal{D}\to [0,\infty)^K$ \textit{injectively} maps any $D\in \mathcal{D}$ to 
\[
    \operatorname{align}(D)\eqdef \big(\rho(D,D_k)\big)_{k=1}^K
.
\]
\end{assumption}
\begin{theorem}[$\varepsilon$-Optimal Cheap Fine-tuning]
\label{thrm:Main}
Let $\varsigma:\mathbb{R}\to \mathbb{R}$ be a Lipschitz activation function which is differentiable with non-zero derivative at least on point.
For every $0<\varepsilon\le 1$, there is a MLP $\mathcal{C}:\mathbb{R}^K\to \mathbb{R}^K$ with activation function $\varsigma$ such that the $\epsilon$-optimal selection property:
\[
\resizebox{0.91\hsize}{!}{$%
        \underbrace{
            \mathbb{E}_{(X,Y)\sim \mathcal{D}}\Big[
                    \ell(f_{\theta + \mathcal{C}(D)}(X),Y)
                \Big]
        }_{\text{Cheap Fine-Tuning}}
    \le
        \underbrace{
            \inf_{\Delta \theta \in \operatorname{co}(\Delta \theta)}\,
                \mathbb{E}_{(X,Y)\sim \mathcal{D}}\Big[
                    \ell(f_{\theta + \Delta \theta}(X),Y)
                \Big]
        }_{\text{Fine-Turning Oracle}}
    +
        \varepsilon
$}
\]
holds with $\mathbb{P}$-probability at-least $1-\varepsilon$.% (quantified over dataset $\mathcal{D}\in \mathcal{S}$).
\end{theorem}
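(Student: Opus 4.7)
The plan is to build the MLP by combining the oracle-existence result from Proposition~\ref{prop:Existence} with Lusin's theorem, Assumption~\ref{ass}, and the classical universal approximation theorem for non-polynomial activations. The overall strategy reduces the claim to approximating, uniformly on a compact subset of $\mathbb{R}^K$, a continuous function whose image under $\operatorname{softmin}$ and the LoRA-mixture map coincides with an oracle weight selection.

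First, I would invoke Proposition~\ref{prop:Existence} to pick, for each $D\in \mathcal{D}$, an oracle $\vartheta^\star(D)\in\operatorname{co}(\Delta\theta)$. Because the linear map $w\mapsto \sum_{k=1}^K w_k \Delta\theta_k$ is a continuous surjection $\Delta_K\to \operatorname{co}(\Delta\theta)$, one can lift $\vartheta^\star$ to a weight selection $w^\star:\mathcal{D}\to \Delta_K$. The key technical point here is Borel measurability of $w^\star$, obtained via the Kuratowski--Ryll-Nardzewski selection theorem applied to the argmin multifunction $D\rightrightarrows \operatorname{argmin}_{w\in\Delta_K}\mathbb{E}_{(X,Y)\sim D}[\ell(f_{\theta+\sum_k w_k \Delta\theta_k}(X),Y)]$; closed-valuedness follows from compactness of $\Delta_K$, and measurability of the correspondence from joint continuity of the integrand in $(w,D)$ (using Lipschitzness of $\ell$, local-Lipschitzness of $f$, and that $\rho$ metrizes weak convergence on the compact space $\mathcal{D}$).

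Second, by Lusin's theorem there is a compact $\mathcal{D}_{\varepsilon}\subseteq \mathcal{D}$ with $\mathbb{P}(\mathcal{D}_{\varepsilon})\ge 1-\varepsilon$ on which $w^\star$ is continuous. Assumption~\ref{ass} says that $\operatorname{align}$ is a continuous injection from the compact $\mathcal{D}$ into $[0,\infty)^K$, and hence a homeomorphism onto its image; in particular $\operatorname{align}^{-1}$ is continuous on the compact set $K_{\varepsilon}\eqdef \operatorname{align}(\mathcal{D}_{\varepsilon})\subset \mathbb{R}^K$. This transports $w^\star|_{\mathcal{D}_\varepsilon}$ to a continuous target $g\eqdef w^\star\circ \operatorname{align}^{-1}:K_{\varepsilon}\to \Delta_K$. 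Since $\operatorname{softmin}$ is a smooth surjection onto the relative interior of $\Delta_K$, I can first perturb $g$ slightly inward (losing at most $\eta$ in sup-norm) and invert it continuously to obtain $v:K_{\varepsilon}\to \mathbb{R}^K$ satisfying $\|\operatorname{softmin}(v(z))-g(z)\|_\infty<\eta$ for $z\in K_{\varepsilon}$. Then the Leshno--Lin--Pinkus--Schocken universal approximation theorem, whose non-polynomiality hypothesis on $\varsigma$ is guaranteed by the stated differentiability and Lipschitz assumption, yields an MLP $\hat f:\mathbb{R}^K\to \mathbb{R}^K$ approximating $v$ uniformly on $K_\varepsilon$ to any prescribed tolerance.

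Finally, to convert parameter-space closeness into loss closeness I would use that $\theta+\operatorname{co}(\Delta\theta)$ is a compact subset of $\mathbb{R}^p$ on which $f$ is Lipschitz in its first argument uniformly in $x\in [0,1]^d$; composing with Lipschitz $\ell$ yields a constant $L$ such that $|\mathbb{E}_{(X,Y)\sim D}[\ell(f_{\theta+\vartheta}(X),Y)]-\mathbb{E}_{(X,Y)\sim D}[\ell(f_{\theta+\vartheta'}(X),Y)]|\le L\|\vartheta-\vartheta'\|$ uniformly in $D$ and $\vartheta,\vartheta'\in \operatorname{co}(\Delta\theta)$. Calibrating $\eta$ and the MLP tolerance so that the cumulative parameter-space error is below $\varepsilon/L$ delivers the $\varepsilon$-optimality inequality for every $D\in\mathcal{D}_{\varepsilon}$, a set of $\mathbb{P}$-mass at least $1-\varepsilon$. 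The step I expect to be the hardest is the measurable selection of $w^\star$ in the first paragraph, where one must carefully verify the hypotheses of Kuratowski--Ryll-Nardzewski; a secondary subtlety is that $\operatorname{softmin}$ never reaches the boundary of $\Delta_K$, so the inward-perturbation argument must be tracked uniformly through both the universal-approximation and Lipschitz-transfer steps.
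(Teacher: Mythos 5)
Your proposal is correct and follows the paper's proof architecture closely: measurable oracle selection, Lusin's theorem to obtain a high-probability compact on which the selector is continuous, transport through $\operatorname{align}$ to the distance domain $\mathbb{R}^K$, universal approximation on that compact, and Lipschitz transfer of parameter error into loss error. Where you diverge is in the specific tools, and two of your substitutions are genuine simplifications. The paper's Step~1 applies the Measurable Maximum Theorem to the constant correspondence $D\twoheadrightarrow\operatorname{co}(\Delta\theta)$ and works directly in parameter space; you invoke Kuratowski--Ryll-Nardzewski on the argmin correspondence into $\Delta_K$ and then push forward through the continuous linear map $w\mapsto\sum_k w_k\Delta\theta_k$ — interchangeable. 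For the measurable inverse of $\operatorname{align}$, the paper cites a descriptive-set-theoretic Lusin--Souslin corollary, whereas you note that a continuous injection from the compact $\mathcal{D}$ into a Hausdorff space is automatically a homeomorphism onto its image, so $\operatorname{align}^{-1}$ is continuous outright; this also lets you skip the paper's Dugundji--Tietze extension step since you only approximate on the compact $K_\varepsilon$. For universal approximation the paper passes through the simplex-valued softmin machinery of Papon--Kratsios built on the Kidger--Lyons condition, while you handle the boundary of $\Delta_K$ by hand via an inward perturbation and cite LLPS. Both routes work, but note a small imprecision you have inherited from the paper: "Lipschitz, differentiable with non-zero derivative at one point" does not by itself rule out affine $\varsigma$, for which neither LLPS (non-polynomial) nor Kidger--Lyons (non-affine) applies and universal approximation fails. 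Under the Lipschitz hypothesis, non-affine and non-polynomial coincide, so adding the explicit non-affine requirement repairs both your LLPS citation and the paper's.
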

\begin{proof}
See Appendix~\ref{s:Proofs__ssTHRMMAIN}.
\end{proof}

\section{Experimental Results}
\label{s:Experiments}
A comprehensive evaluation was conducted to assess the performance of three distinct approaches (Attentional, Normalized, and Neural) in conjunction with four established distance metrics (or divergences): Wasserstein Distance (WD), Kullback–Leibler (KL) divergence, Jensen-Shannon (JS) divergence, and Maximum Mean Discrepancy (MMD). This evaluation aimed to systematically compare the outputs generated by each combination of approach and metric. The primary evaluation criterion for the quality of the generated adapters was Rouge-L, a metric ranging from 0 to 1 that quantifies similarity based on the overlap of the longest common subsequences between generated and reference outputs~\cite{lin-2004-rouge}. We also include Exact Match~(EM) results in Appendix~\ref{appendix: exact match}.

Our experimental setup utilized the Mistral-7B-Instruct-v0.2 model \cite{jiang2023mistral7b} and a dataset comprising 502 English dataset-adapter pairs sourced from the Lots-of-LoRAs HuggingFace repository \cite{gabrielsson2024compress}. Further technical details regarding the implementation are provided in Appendix \ref{s:Implementation_Details}.

Our experimental setup highlighted a key distinction in resource usage: the actual computation and adaptation of the LoRA adapters were performed exclusively on the CPU. GPUs, however, were essential only for the evaluation phase. This is because each adapted LLM, after being modified by our pipeline, needed to be loaded onto a GPU to generate outputs on its respective test set. To thoroughly assess the performance of each approach-distance (or divergence) metric pairing, we executed the entire pipeline twelve times for each of the 502 datasets. This exhaustive process covered every unique combination of approaches and distance metrics. Following the generation of outputs, the Rouge-L score was calculated for the test set of each dataset, and the reported values reflect the average of these scores across all runs.
\vspace{-0.5em}

\subsection{Performance Comparison and Analysis}
Our work is benchmarked against two key performance indicators. First, the performance of the base foundation model without any fine-tuning, representing a scenario where an end-user with limited computational resources applies a foundation model to a new dataset: this yielded an average and standard deviation Rouge-L score of $0.192 \pm 0.181$. Second, we compare against the performance of a GPU-fine-tuned model, achieved without hardware limitations, which obtained an Rouge-L score of $0.746 \pm 0.265$. Table \ref{table:RougeLPerformance} presents the average and standard deviation of Rouge-L performance for all approaches across the four distance (or divergence) metrics on the downstream task.

The JS divergence-based Normalized approach achieved the highest score, with an average Rouge-L of 0.520. This represents an improvement of 0.328 over the base model's score of 0.192. It is worth mentioning that even our Attentional approach, despite its simplicity, significantly outperforms the base foundation model across all distance metrics. Interestingly, the neural approach does not seem to justify the additional computational cost, as its performance improvement over the Attentional and Normalized approaches is generally minimal or even worse.

\begin{table}[h]
\caption{Performance of our cheap LoRA pipelines.}
\centering
\small
\adjustbox{width=1\linewidth}{%
\begin{tabular}{lcccc}
\toprule
Approach    & WD       & KL          & JS         & MMD \\
\midrule
Attentional & $0.426$ & $0.501$ & $0.486$ & $0.486$ \\
(std. dev.) & $(\pm0.290)$ & $(\pm0.272)$ & $(\pm0.270)$ & $(\pm0.270)$ \\
\midrule
Normalized  & $0.495$ & $0.488$ & $\mathbf{0.520}$ & $0.497$ \\
(std. dev.) & $(\pm0.267)$ & $(\pm0.269)$ & $(\pm0.277)$ & $(\pm0.269)$ \\
\midrule
Neural      & $0.494$ & $0.482$ & $0.484$ & $0.493$ \\
(std. dev.) & $(\pm0.265)$ & $(\pm0.268)$ & $(\pm0.272)$ & $(\pm0.270)$ \\
\bottomrule
\end{tabular}
}
\label{table:RougeLPerformance}
\end{table}

\section{Conclusion}
In conclusion, our work presents a practical, simple, and theoretically supported pipeline for generating LoRAs suitable for fine-tuning LLMs using only a CPU. This pipeline significantly reduces the typically required computational demands, making fine-tuning accessible even to users with limited hardware resources or on edge devices with privacy constraints. 

We proved the existence of a lightweight ReLU MLP backbone, runnable on a CPU, that can reliably approximate optimal LoRA adapter weights and biases, thereby effectively minimizing downstream task loss in Theorem \ref{thrm:Main}. Surprisingly, the simplest versions of our pipeline (Attentional and Normalized) achieved performance matching that of the MLP backbone version, further demonstrating the efficiency and power of our approach.

Our experiments, using the Mistral-7B-Instruct-v0.2 model on 502 diverse datasets, demonstrate substantial improvements over the baseline model, with the best configuration achieving a 0.328 increase in performance (Rouge-L score) over the base model, bridging more than half of the performance gap between the base model and the GPU fine-tuned reference. While our CPU-generated adapters do not yet match the performance of GPU-trained adapters, they provide a compelling alternative in resource-limited settings.

Future work could explore the applicability of these approaches to other language models as more LoRA adapter banks become open-source, as well as to tasks beyond NLP. Likewise, it would be of interest to better understand how many LoRA adapters would be required to generate new, high-quality adapters—that is, what size of bank is necessary. We expect this to depend on the task, data modalities, and possibly even the model architecture. Finally, our method could also potentially be used for LoRA initialization (pre-heating) before fine-tuning on GPU.

% As possible avenues for future research, we are excited to explore different neural network architectures for these transformations to get closer to the theoretical guarantees in practice. Our main goal is to close the performance gap between what can be achieved with CPU-generated adapters versus GPU-trained ones.

\bibliographystyle{unsrtnat}
\bibliography{LoRAs,TheoryRefs}

%%%%%%%%%%%%%%%%%%%%%%%%%%%%%%%%%%%%%%%%%%%%%%%%%%%%%%%%%%%%%%%%%%%%%%%%%%%%%%%
%%%%%%%%%%%%%%%%%%%%%%%%%%%%%%%%%%%%%%%%%%%%%%%%%%%%%%%%%%%%%%%%%%%%%%%%%%%%%%%
% APPENDIX
%%%%%%%%%%%%%%%%%%%%%%%%%%%%%%%%%%%%%%%%%%%%%%%%%%%%%%%%%%%%%%%%%%%%%%%%%%%%%%%
%%%%%%%%%%%%%%%%%%%%%%%%%%%%%%%%%%%%%%%%%%%%%%%%%%%%%%%%%%%%%%%%%%%%%%%%%%%%%%%
\newpage
\appendix
\onecolumn

\section*{Funding}
% \label{s:Acknow}
A.\ Kratsios and R.\ Arabpour acknowledge financial support from an NSERC Discovery Grant No.\ RGPIN-2023-04482 and No.\ DGECR-2023-00230.  They also acknowledge that resources used in preparing this research were provided, in part, by the Province of Ontario, the Government of Canada through CIFAR, and companies sponsoring the Vector Institute\footnote{\href{https://vectorinstitute.ai/partnerships/current-partners/}{https://vectorinstitute.ai/partnerships/current-partners/}}.

\section{Additional Background}
\label{s:Background}
This appendix presents any additional background required for the formulation of our main results, proofs of our guarantees, and additional experimental details.

\subsection{Foundation Model Fine-tuning and Attention Layers} In modern LLMs, fine-tuning all parameters can be computationally expensive and memory-intensive. LoRA \cite{hu2021loralowrankadaptationlarge} provides an efficient alternative by introducing low-rank updates to pre-trained weight matrices, particularly focusing on attention layers in transformer-based models.  Given query \(Q\), key \(K\), and value \(V\) matrices, the standard attention mechanism computes the attention scores
\begin{equation}
\text{Attention}(Q, K, V) = \text{softmax}\left(\frac{QK^\top}{\sqrt{d_k}}\right)V,
\end{equation}
where \(d_k\) is the dimension of the keys and queries. 
% These projections rely on learned weight matrices, such as \(W_Q\), \(W_K\), and \(W_V\), which are typically dense.  

\subsection{Low-Rank Adapter (LoRA) Fine-tuning} In large transformers, these weight matrices dominate the parameter count, making them an ideal target for LoRA's efficient fine-tuning. By applying low-rank updates to these matrices, LoRA achieves significant savings in memory and computation without retraining the entire model. Consider a pre-trained weight matrix \( W_0 \in \mathbb{R}^{d_{\text{out}} \times d_{\text{in}}} \), typically representing the projection matrices in the attention mechanism. Instead of updating the entire matrix, LoRA modifies the weights by adding a low-rank perturbation:
\begin{equation}
    W = W_0 + \Delta W,
\end{equation}

where \( \Delta W \) is constrained to have \( \text{rank}(\Delta W) = r \leq \min(d_{\text{out}}, d_{\text{in}}) \). To efficiently parameterize \( \Delta W \), LoRA decomposes it as:
\begin{equation}
\Delta W = B A,    
\end{equation}

where \( B \in \mathbb{R}^{d_{\text{out}} \times r} \) and \( A \in \mathbb{R}^{r \times d_{\text{in}}} \). During fine-tuning, the original weights \( W_0 \) remain frozen, and only the parameters in \( A \) and \( B \) are optimized. In traditional full fine-tuning, the entire weight matrix is updated, requiring \( d_{\text{out}} \cdot d_{\text{in}} \) trainable parameters. In contrast, the LoRA decomposition introduces only $ r \cdot (d_{\text{in}} + d_{\text{out}})$ trainable parameters, which is more efficient when $ r \ll \min(d_{\text{out}}, d_{\text{in}})$.

\subsection{Distance Measures between Probability Distributions} 
\label{appendix:Distance Measures between Probability Distributions}
We remind the reader of the necessary definitions required in formulating the distance between datasets, when interpreted as finitely supported probability measures (distributions).
Given two probability distributions $P$ and $Q$ defined on a separable and complete (Polish) metric space $\mathcal{X}$ equipped with its Borel $\sigma$-algebra and metrized a metric $\rho:\mathcal{X}^2\to [0,\infty)$, these measures of discrepancy (or divergences) are defined as follows:

\paragraph{Wasserstein Distance (WD).} For distributions $P$ and $Q$ with cumulative distribution functions $F_P$ and $F_Q$ respectively, the $1$-Wasserstein distance is defined as:
\begin{equation}
    W_1(P, Q) \eqdef \inf_{\pi}\, \int_{(x,y)\in \mathcal{X}^2}\,\rho(x,y)\,\pi(d(x,y))
\end{equation}
where the infimum is taken over all joint probability distributions $\pi$ on $\mathcal{X}\times \mathcal{X}$ (with the product $\sigma$-algebra) whose marginals are $P$ and $Q$.

\paragraph{Kullback–Leibler Divergence (KL).} 
The KL divergence measures the relative entropy between two distributions:
\begin{equation}
    D_{KL}(P \parallel Q) 
    \eqdef 
    \begin{cases}
    \int \log_{x\in \mathcal{X}} \frac{dP}{dQ}(x) P(dx)
    & \mbox{: if } P \ll Q
    \\
    \infty & \mbox{: if } P\not\ll Q
    \end{cases}
\end{equation}
where $ P \ll Q$ denotes the absolute continuity of $P$ with respect to $Q$, and $\frac{dP}{dQ}$ denotes the Radon-Nikodym derivative, or probability density, of $P$ with respect to $Q$.

\paragraph{Jensen–Shannon Divergence (JS).} The JS divergence is a symmetrized version of KL divergence:
\begin{equation}
    D_{JS}(P \parallel Q) = \frac{1}{2} D_{KL}(P \parallel M) + \frac{1}{2} D_{KL}(Q \parallel M)
\end{equation}
where $M = \frac{1}{2}(P + Q)$.

\paragraph{Maximum Mean Discrepancy (MMD).} 
If $\mathcal{H}$ is a Reproducing Kernel Hilbert Space (RKHS) $\mathcal{H}$ of functions over $\mathcal{X}$ with reproducing kernel function $k$; then we may also define the MMD between $P$ and $Q$ by
\begin{equation}
    \text{MMD}^2(P, Q) = \mathbb{E}_{x,x' \sim P}[k(x, x')] - 
    2\mathbb{E}_{x \sim P, y \sim Q}[k(x, y)] + \mathbb{E}_{y,y' \sim Q}[k(y, y')]
.
\end{equation}
If $\mathcal{X}$ is $\mathbb{R}^d$ and $\mathcal{H}=L^2_{\gamma}(\mathbb{R}^d)$ for the standard Gaussian measure $\gamma\sim N(0,I_d)$, then $k$ is often chosen to be a Gaussian kernel, i.e., $k(x, y) = \exp(-\frac{\|x - y\|^2}{2\sigma^2})$.

\section{Proofs}
\label{s:Proofs}
We now prove the main result of our paper. We begin with the proof of our simplest result, Proposition~\ref{prop:PAC}.
\subsection{{Proof of Proposition~\ref{prop:PAC}}}
\label{s:Proofs__ss:PAC}
For any dataset $D$, note that $
% \begin{equation}
% \label{eq:loss_1}
        \operatorname{argmin}_{w\in \Delta_k}
            \frac1{K}\sum_{k=1}^K\,
                w_k\, \rho(D,D_k)
+
        \frac1{\alpha}\,
        \sum_{k=1}^K\, w_k\log(w_k)
$.  
% \end{equation}
Now, by \citep[Proposition 1]{WangHyndmanKratsios_2020_CJS} its unique minimizer, which we denote by $w^{\star}_D$, is given by
\[
    w^{\star}_D
=
    \frac{
        e^{-\rho(D,D_k)}
    }{
        \sum_{i=1}^K\,
        e^{-\rho(D,D_i)}
    }
=
    \operatorname{softmin}\circ \operatorname{align}(D)
.
\qquad \qedsymbol
\]
% This concludes our proof.

\subsection{{Simultaneous Proof of Theorem~\ref{thrm:Main} and Proposition~\ref{prop:Existence}}}
\label{s:Proofs__ssTHRMMAIN}
We now derive Proposition~\ref{prop:Existence} and Theorem~\ref{thrm:Main} within the same proof, as their derivation is most naturally undertaken together.
% \begin{proof}[{Proof of Theorem~\ref{thrm:Main} and Proposition~\ref{prop:Existence}}]
\paragraph{Step 1 - Existence of a Measurable Selector}
\hfill\\
We will first set up the Measurable Maximum Theorem, see e.g.~\citep[Theorem 18.19]{InfDimAnalysis_2006}.
% \textit{we maintain the notation therefrom for easy of reference to the reader}.  
Consider the constant correspondence 
\[
\begin{aligned}
    \varphi:
    \mathcal{D} & \twoheadrightarrow 2^{\mathbb{R}^p}\\
    D & \mapsto \operatorname{co}(\Delta \theta)
.
\end{aligned}
\]
Since $\operatorname{co}(\Delta \theta)$ is a closed, non-empty, and bounded set then the Heine-Borel theorem implies that $\operatorname{co}(\Delta \theta)$ is compact.  Whence $\varphi$ is a correspondence with non-empty, compact, and convex values.
Let $B\subseteq \mathcal{D}$ be a Borel set, then
\begin{equation}
\label{eq:BorelVerification}
    \varphi(B) \eqdef \bigcup_{D\in B}\varphi(D) = \bigcup_{D\in B} \operatorname{co}(\Delta \theta)
    =
    \operatorname{co}(\Delta \theta)
.   
\end{equation}
Since $\operatorname{co}(\Delta \theta)$ is closed it is Borel; whence, $\varphi$ is not only a weakly measurable correspondence~\citep[18.1 Definition (1)]{InfDimAnalysis_2006} but it is also a Borel measurable correspondence~\citep[18.1 Definition (3)]{InfDimAnalysis_2006}.  Thus, the correspondence $\varphi$ satisfies the requirements of~\citep[Theorem 18.19]{InfDimAnalysis_2006}.

Next, consider the objective function
\begin{equation}
\label{eq:Objective}
\begin{aligned}
    L:\mathcal{D}\times \mathbb{R}^p & \rightarrow [0,\infty)
\\
    (D,B) & \mapsto 
    \mathbb{E}_{(X,Y)\sim D}\big[
        \ell(f_{\theta+\vartheta}(X),Y)
    \big]
.
\end{aligned}
\end{equation}
We will show that $L$ is Carath\'{e}odory function by showing it is continuous.
Since $\mathcal{D}\times \mathbb{R}^p$ is a product (topological) space, then~\citep[Theorem 19.6]{MunkresTop} guarantees that $f$ is continuous if and only if each of its component functions is continuous; we show the latter.

Fix $D\in \mathcal{D}$.  Since $\mathcal{C}$ and the softmax function are locally Lipschitz, and since $\ell$ is Lipschitz, then their composition is locally Lipschitz.  Whence, for each $(x,y)\in [0,1]^{d+D}$ the map
\[
\Lambda_{x,y}:\operatorname{co}(\Delta \theta)) \ni \vartheta \mapsto \ell(\mathcal{C}_{\theta+\vartheta}(x),y))
\]
is $\lambda$-Lipschitz, for some $\lambda\ge 0$.  By Jensen's inequality we have: for each $\vartheta_1,\vartheta_2\in \operatorname{co}(\Delta \theta)$
\allowdisplaybreaks
\begin{align*}
&
\Big|
\mathbb{E}_{(X,Y)\sim \mathcal{D}}\big[
    \Lambda_{X,Y}(\vartheta_1)
\big]
-
\mathbb{E}_{(X,Y)\sim \mathcal{D}}\big[
    \Lambda_{X,Y}(\vartheta_2)
\big]
\Big|
\\
& =
\Big|
\mathbb{E}_{(X,Y)\sim \mathcal{D}}\big[
    \Lambda_{X,Y}(\vartheta_1)
-
    \Lambda_{X,Y}(\vartheta_2)
\Big|
\\
& \le 
\mathbb{E}_{(X,Y)\sim \mathcal{D}}\Big[
    \big|
        \Lambda_{X,Y}(\vartheta_1)
    -
        \Lambda_{X,Y}(\vartheta_2)
    \big|
\Big]
\\
& \le 
\lambda
\mathbb{E}_{(X,Y)\sim \mathcal{D}}\Big[
    \big\|
        \vartheta_1
    -
        \vartheta_2
    \big\|
\Big]
.
\end{align*}
Thus, $L$ is locally Lipschitz in its second argument; in particular, it is continuous in its second argument.  

Now, we show continuity in its first argument.  Fix $\vartheta\in \operatorname{co}(\Delta \theta)$.
Let $(D_n)_{n=1}^{\infty}$ be a sequence in $\mathcal{D}$ converging to some measure $D\in \mathcal{D}$.  Since $d$ metrizes the (relative) weak topology in $\mathcal{P}([0,1]^{d+D})$ relative to $\mathcal{D}$, then by Alexandrov's Portmanteau Theorem, see e.g.~\citep[Theorem 5.25]{Kallenberg}, for every continuous and bounded function $g\in C_b([0,1]^{d+D})$ we have
\begin{equation}
\label{eq:Portmanteau}
    \lim_{n\uparrow \infty}\,
    \big|
        \mathbb{E}_{(X,Y)\sim D_n}[g(X,Y)]
        -
        \mathbb{E}_{(X,Y)\sim D}[g(X,Y)]
    \big|
    =
    0
.
\end{equation}
Since $\lambda_{x,y}(\vartheta)$ is locally-Lipschitz for each $\vartheta\in \operatorname{co}(\Delta \theta)$ and $[0,1]^{d+D}$ is compact then $(x,y)\mapsto \lambda_{x,y}(\vartheta)$ is bounded (and of course continuous).  Thus, we pay pick $g$ in~\eqref{eq:Portmanteau} to be $(x,y)\mapsto \lambda_{x,y}(\vartheta)$; whence, 
\begin{equation}
\label{eq:Portmanteau2}
    \lim_{n\uparrow \infty}\,
    \big|
        \mathbb{E}_{(X,Y)\sim D_n}[\lambda_{X,Y}(\vartheta)]
        -
        \mathbb{E}_{(X,Y)\sim D}[\lambda_{X,Y}(\vartheta)]
    \big|
    =
    0
.
\end{equation}
Thus, $L$ is continuous in its first argument as well.  Therefore, $L$ is continuous, which implies that it is Carath\'{e}odory.
This completes the verification of all the conditions of the Measurable Maximum Theorem, again see~\citep[Theorem 18.19 (2)]{InfDimAnalysis_2006}, have been verified.  Whence: 1) for each $D\in \mathcal{D}$ the $\operatorname{argmin}$ set
\begin{equation*}
% \label{eq:ArgminExistence}
M(D)
\eqdef 
\Big\{
 \vartheta\in \operatorname{co}(\Delta \theta):\,
 \mathbb{E}_{(X,Y)\sim \mathcal{D}}\Big[
            \ell(f_{\theta + \Delta \theta}(X),Y)
        \Big]
=
\ell^{\star}(D)
\Big\}
\end{equation*}
is non-empty; 
where the corresponding oracle loss is given by
\[
\ell^{\star}(D)
\eqdef 
\inf_{\Delta \theta \in \operatorname{co}(\Delta \theta)}\,
        \mathbb{E}_{(X,Y)\sim \mathcal{D}}\Big[
            \ell(f_{\theta + \Delta \theta}(X),Y)
        \Big]
.
\]
This establishes Proposition~\ref{prop:Existence}.  Moreover,~\citep[Theorem 18.19 (1) and (3)]{InfDimAnalysis_2006} further imply that there exists a measurable selector
$
S:\mathcal{D} \to \operatorname{co}(\Delta \theta)
$; i.e.\ $S$ is Borel measurable for each $D\in \mathcal{D}$ the following optimal selection property holds:
\begin{equation}
\label{eq:optimality}
    S(D)\in M(D)
.
\end{equation}

\paragraph{Step 2 - Change of Domain}
\hfill\\
Next, we create a ``copy'' of $S$ in ``distance domain'' $[0,\infty)^K$. 
By the well-posedness assumption made in Assumption~\ref{ass}, the map $\operatorname{align}:D\to [0,\infty)^K$ is injective.  Thus, $\operatorname{align}$ is bijective onto its image. Since each component of $\operatorname{align}$ is given by the $1$-Lipchitz, and therefore continuous, function $\mathcal{D}:D\mapsto \rho(D,D_k)\in [0,\infty)$; then~\citep[Theorem 19.6]{MunkresTop} implies that $\operatorname{align}$ is continuous.  Consequentially, $\operatorname{align}$ is a measurable bijection onto its image $\operatorname{align}(\mathcal{D})$.
Thus, \citep[corollary 15.2]{DescriptiveSetTheory} implies that $\operatorname{align}$ has a measurable inverse 
$
\psi:\operatorname{align}(\mathcal{D}) \to \mathcal{D}
$
on its image $\operatorname{align}(\mathcal{D})$; i.e.
\begin{equation}
\label{eq:bimeasurability}
\operatorname{align}\circ \psi = 1_{\operatorname{align}(\mathcal{D})}
\mbox{ and }
\psi\circ \operatorname{align}= 1_{\mathcal{D}}
.
\end{equation}
Define the map $S^{\prime}:\operatorname{align}(\mathcal{D})\to \operatorname{co}(\Delta \theta)$ by composition with $\psi$ via
\[
    S^{\prime}\eqdef S\circ \psi
.
\]
Let $\tilde{S}$ be any measurable extension of $S^{\prime}$ to all of $\mathbb{R}^K$; e.g.
\[
    \tilde{S}\eqdef S^{\prime} I_{x\in \operatorname{align}(\mathcal{D})}
    +
    \Delta\theta_1\, I_{x\not\in \operatorname{align}(\mathcal{D})}
.
\]
By construction: for each $D\in \mathcal{D}$
\begin{equation}
\label{eq:extension_almost_complete}
\tilde{S}\circ \operatorname{align}(D)
=
S(D)
.
\end{equation}
\paragraph{Step 3 - High-Probability of Continuity}
\hfill\\
Consider the pushforward (probability) measure $\mathbb{Q}\eqdef \operatorname{align}_{\sharp}\mathbb{P}$ on $[0,\infty)^K$, supported on $\operatorname{align}(\mathcal{D})$.
Now, by Lusin's Theorem, as formulated in~\citep[Exercise 13.1.3]{Klenke}, for every $\varepsilon\in (0,1]$ there exists a compact subset $K_{\varepsilon}\subset
\operatorname{supp}(\mathbb{Q})\subseteq 
\operatorname{align}(\mathcal{D})$ such that
\begin{equation}
\label{eq:Lusin}
\mathbb{Q}(K_{\varepsilon})\ge 1-\varepsilon
\mbox{ and }
\tilde{S}|_{K_{\varepsilon}}\in C(K_{\varepsilon},\operatorname{co}(\Delta \theta))
\end{equation}
where $C(K_{\varepsilon},\operatorname{co}(\Delta \theta))$ denotes the set of continuous functions from $K_{\varepsilon}$ to $\operatorname{co}(\Delta \theta)$.  

Since $\tilde{S}|_{K_{\varepsilon}}$ is continuous \textit{and its image lies in a closed convex set} then the Dugundji-Tietze theorem, see~\citep[Theorem 4.1]{DugundjiExtension}, implies that there exists a continuous extension $S_{\varepsilon}:\mathbb{R}^K\to \operatorname{co}(\Delta \theta)$; i.e.
\begin{equation}
\label{eq:LusinExtension}
    S_{\varepsilon}(x) = \tilde{S}(x)
\end{equation}
for all $x\in K_{\varepsilon}$.

\paragraph{Step 4 - Approximation by Models of the form~\eqref{eq:Neural}}
\hfill\\
Let $W:\mathbb{R}^{K-1}\to \mathbb{R}^K$ be the affine map of~\citep[Example 13]{PaponKratsios}.  Then, by nearly identical computation to~\citep[Example 13]{PaponKratsios}, we find that the map
\begin{equation}
\label{eq:softmaxversion}
\begin{aligned}
    \mathbb{R}^K & \rightarrow \operatorname{co}(\Delta \theta)
\\
    w & \mapsto
    \operatorname{softmax}(W(w))^{\top}(L_1,\dots,L_K)
\end{aligned}
\end{equation}
also satisfies~\citep[Assumption 8]{PaponKratsios}.  Since $\operatorname{softmin}=\operatorname{softmax}(-\cdot)$; set $\tilde{W}\eqdef -W$, and note that, the result of~\eqref{eq:softmaxversion} can be re-expressed as 
\begin{equation}
\label{eq:softmin_version}
\begin{aligned}
    \mathbb{R}^K & \rightarrow \operatorname{co}(\Delta \theta)
\\
    w & \mapsto
    \operatorname{softmin}(\tilde{W}(w))^{\top}(L_1,\dots,L_K)
\end{aligned}
\end{equation}

Since $\tilde{S}$ is continuous, $K_{\varepsilon}\subset \mathbb{R}^K$ is a non-empty compact set, and $\varsigma$ is a continuous activation function satisfying the Kidger-Lyons condition, of~\cite{kidger2020universal}; namely it differentiable with non-zero derivative at at least one point in $\mathbb{R}$, then (an inconsequential mild variant) of~\citep[Theorem 37 (ii)]{PaponKratsios} implies that: for every $\delta>0$ (to be fixed retroactively) there exists an MLP $\hat{f}:\mathbb{R}^K\to \mathbb{R}^K$ with activation function $\varsigma$ such that the map
\[
    \hat{f}% \mathcal{C}_{\delta}
    \eqdef 
    [\operatorname{softmin}\circ \mathcal{C}(\cdot)]^{\top}(L_1,\dots,L_K)
    :\mathbb{R}^K \to \operatorname{co}(\Delta \theta)
\]
satisfies the uniform approximation guarantee
\begin{equation}
\label{eq:UAT_out_way}
    \max_{x\in K_{\varepsilon}}
    \big\|
        % [\operatorname{softmax}\circ\mathcal{C}_{\delta}(x)]^{\top}(L_1,\dots,L_K)
        F_{\delta}(x)
        -
        S_{\varepsilon}(x)
    \big\|
    <
        \delta
.
\end{equation}
Now, by~\eqref{eq:extension_almost_complete}, the continuous extension property in~\eqref{eq:LusinExtension}, and the approximation guarantee in~\eqref{eq:UAT_out_way} we find that
\allowdisplaybreaks
\begin{align}
& 
    \max_{D\in \psi(K_{\varepsilon})}
    \big\|
        % [\operatorname{softmax}\circ\mathcal{C}_{\delta}
        \hat{f}%\mathcal{C}_{\delta}
        \circ 
        \operatorname{align}(D)
        % ]^{\top}
        % (L_1,\dots,L_K)
        -
        S(D)
    \big\|
% \\
% & 
=
    \max_{x\in K_{\varepsilon}}
    \big\|
        % [\operatorname{softmax}\circ\mathcal{C}_{\delta}
        \hat{f}%\mathcal{C}_{\delta}(x)
        % ]^{\top}(L_1,\dots,L_K)
        -
        S_{\varepsilon}(x)
    \big\|
    <
        \delta
.
\end{align}
The continuity of $L$, defined in~\eqref{eq:Objective}, implies that $\delta>0$ may be taken to be small enough so that: for each $D\in \psi(K_{\varepsilon})$
\begin{align}
\label{eq:approximation_complete}
        \big|
            L(D,\theta + \mathcal{C}_{\delta})
        -
            L(D,\theta + S(D))
        \big|
    <
        \varepsilon
.
\end{align}
\paragraph{Step 5 - $\epsilon$-Optimality with high probability}
\hfill\\
Combining the $\varepsilon$-uniform approximation guarantee in~\eqref{eq:approximation_complete} for $\mathcal{C}_{\delta}\circ \operatorname{align}$ with the optimality guarantee for $S$ in~\eqref{eq:optimality} implies that: for each $D\in \psi(K_{\varepsilon})$
\begin{equation}
    L(D,\theta + \mathcal{C}_{\delta})
    -
    \varepsilon
    \le 
    L(D,\theta + S(D))
    =
    \ell^{\star}(D)
.
\end{equation}
Snow, since $D\mapsto L(D,\theta + \mathcal{C}_{\delta})$ is the composition of continuous functions, it is continuous and therefore measurable; whence the set 
\[
    M^{\star}_{\varepsilon}
\eqdef 
    \big\{
        L(D,\theta + \mathcal{C}_{\delta})
        -
        \varepsilon
        \le 
        \ell^{\star}(D)
    \big\}
\]
is Borel measurable and contains $\psi(K_{\varepsilon})$.  In particular, $\mathbb{P}(M^{\star}_{\varepsilon})$ is well-defined.
Finally, the lower-bound in~\eqref{eq:Lusin} yields
\[
\mathbb{P}(M^{\star}_{\varepsilon})
\ge 
\mathbb{P}(\psi(K_{\varepsilon})
\ge
\mathbb{Q}(K_{\varepsilon})
\ge 
1-\varepsilon
\]
which concludes our proof.
% \end{proof}

\section{Implementation Details}
\label{s:Implementation_Details}

In our implementation, we used the Lots-of-LoRAs HuggingFace repository~\cite{HuggingFaceLotsOfLoRAs}, which contains 502 dataset-adapter pairs for Mistral-7B-Instruct-v0.2. From these 502 English datasets, 10 are manually selected to ensure diversity of tasks spanning classification, commonsense reasoning, and question generation domains for evaluation. Additionally, 492 datasets are randomly selected from the 1616 diverse natural language processing (NLP) tasks provided by \cite{wang2022supernaturalinstructionsgeneralizationdeclarativeinstructions}. Each adapter comprises $ p = 9,437,184$ parameters, stored as 32-bit floating-point numbers (approximately 36 MB).

To further reduce the computational load of our training procedure, we implemented several critical optimizations in Step 2:

\begin{enumerate}
    \item \textbf{Symmetry exploitation}: For symmetric difference metrics (WD, JS, and MMD), we calculate only half of the possible $N \times N$ distances, reusing values obtained from calculations done for pair $(i,j)$, where $i < j$, as the $(j,i)$ pair as well.
    \item \textbf{Pre-computation of probability distributions}: For metrics requiring probability density functions (KL and JS), we pre-calculate and cache these distributions for all datasets to avoid repeating these costly computations.
    \item \textbf{Parallelization}: We also utilize multi-threading capabilities by assigning each distance calculation to a separate CPU thread, allowing these independent computations to be processed concurrently.
\end{enumerate}

Table~\ref{table:timing} reports the time elapsed at each stage of our LoRA generation pipeline, measured on a Dell XPS $15$ (Intel i$7$-$13700$H, $14$ cores, $64$ GB RAM). All steps, except for the final inference and adapter loading, were executed using the CPU only across 502 datasets. Importantly, we ran this benchmark by predicting the adapter for each of the $502$ datasets, assuming the remaining $501$ were given, to evaluate the overall performance of our pipeline.

\begin{table}[h]
\caption{Time elapsed for each step of the pipeline for all 502 datasets at once (CPU only).}
  \centering
  \small
  \begin{tabular}{@{}l r@{}}
    \toprule
    \textbf{Pipeline Step}                                 & \textbf{\hspace{2 in} Time} \\
    \midrule
    \textit{1. Dataset-Adapter pairs gathering:}             &            \\
    Downloading raw data                                     & 15 min     \\
    \addlinespace
    \textit{2. Datasets Pre-processing:}                     &            \\
    Tokenization                                             & 10 min     \\
    \addlinespace
    \textit{3. Distribution similarity calculations:}        &            \\
    \quad Wasserstein (WD)                                   & 3 hours    \\
    \quad Kullback–Leibler (KL)                              & 5 min      \\
    \quad Jensen–Shannon (JS)                                & 5 min      \\
    \quad Maximum Mean Discrepancy (MMD)                     & 1.5 hours  \\
    \addlinespace
    \textit{4. Distances Processing (Coefficients):}         &            \\
    \quad Base attentional                                   & 3 min      \\
    \quad Normalized                                         & 3 min      \\
    \quad MLP-based                                          & 45 min     \\
    \addlinespace
    \textit{5. Adapter prediction:}                          &            \\
     Calculating adapters and saving                         & 5 min      \\
    \bottomrule
  \end{tabular}
  
  \label{table:timing}
\end{table}

Excluding GPU inference and adapter loading, generating predicted adapters for $502$ datasets across $12$ methods took roughly $9$ hours. In typical use—predicting one adapter using a single variate of our LoRA generation pipeline and metric—runtime is much lower: generating one adapter from $100$ reference pairs takes $10$–$20$ minutes, depending on compute, memory, network, and dataset size.  Runtime scales roughly linearly with the number and size of reference datasets, as most steps run independently per dataset. However, full experimental runs involving pairwise comparisons (e.g., distance computations) scale quadratically with the number of datasets.

\subsection{Pipeline Steps}
\label{s:Implementation_Details__ss:Further_MethDetails}

We evaluate three pipelines for predicting LoRA adapter parameters. The \textit{Attentional} method is lightweight, using only matrix multiplications with no learned components. The \textit{Normalized} method standardizes distance values to a normal distribution to stabilize the \textit{SoftMin} stage. The \textit{Neural} method trains a small CPU-based MLP to minimize MSE between predicted and actual adapter weights and biases.  

%Ocariz: this is just repeating the same thing again, superflous
% \paragraph{Pipeline Overview}
% Our method comprises five steps that are base-model agnostic and computationally lightweight, enabling adapter generation without the need for GPUs. Instead of full fine-tuning, adapters are predicted based on dataset similarities, thereby avoiding the need for gradient-based optimization. The quality and generalization of these predictions depend on the diversity and coverage of existing dataset–adapter pairs, allowing efficient knowledge transfer to new domains at low cost.

\subsubsection{Dataset-Adapter pairs gathering} Our approach relies on pre-existing fine-tuned adapters and their corresponding datasets. We begin by gathering a set of $N$ datasets, denoted as $\{D_i\}_{i=1}^N$, where for each dataset, we also have the optimal adapters, $\{\theta_n\}_{n=1}^N$. These adapters are generated by fine-tuning the same base model, using the same adapter structure, on their respective datasets.

\subsubsection{Datasets Pre-processing}
Next, we tokenize each dataset using the base model’s tokenizer, converting inputs and outputs into integer sequences. Formally, we apply a tokenizer $T: \mathcal{S} \rightarrow \mathbb{Z}^{l \times V}$ (where $l$ being the length of the tokenized sequence and $V$ being the tokenizer's vocabulary size) to map each string to its sequence of token IDs. The resulting sequences denoted $\{T(D_i)\}_{i=1}^N$, contain all the tokenized inputs followed by the outputs for each dataset. This preprocessing step is computationally efficient and highly parallelizable. We also extract the LoRA adapter parameters (weights and biases) from each fine-tuned model, reshape them into one-dimensional vectors, and stack them into a matrix $\theta_{\text{all}} \in \mathbb{R}^{N \times p}$ ($N$ being the number of dataset-adapter pairs, and $p$ the number of parameters per adapter). Thus, each row represents the parameters of a single adapter.

\subsubsection{Distribution Distance Computation}
A key step in our pipeline is computing the dissimilarity between datasets, which are treated as probability distributions over tokenized sequences. Given tokenized datasets $\{T(D_i)\}_{i=1}^N$, we compute pairwise distances using four established measures: the Wasserstein distance, Kullback–Leibler divergence, Jensen–Shannon divergence, and Maximum Mean Discrepancy, as defined in Appendix~\ref{appendix:Distance Measures between Probability Distributions}. For each tokenized dataset $T(D_i)$, we calculate a distance vector 
$\delta_i = [\rho(T(D_i), T(D_1)), \rho(T(D_i), T(D_2)), \dots, \rho(T(D_i), T(D_N))]$, where $\rho$ is the chosen divergence metric. In practice, we mask the self-distance $\rho(T(D_i), T(D_i))$ by assigning it a large value prior to normalization. Note that here we are emphasizing the tokenization step using the $T(D_i)$ notation, whereas in the main text we often omit this.

\subsubsection{Distances Processing (With different methodologies)}
The goal here is to find how close each dataset is to the current dataset and to assign coefficients to them in such a way that these coefficients increase as the similarity increases.

\paragraph{Attentional Approach} 
In this baseline approach, we directly apply the softmin function to the distance vectors, after masking the self-corresponding entry. For each dataset $D_i$, we calculate:
\begin{equation}
w_i(j) = \text{softmin}({\delta_i(j) \mid j \in {1,2,...,N}, j \neq i})
\end{equation}
where $\delta_i(j)=\rho(T(D_i), T(D_j))$ represents the distance between the tokenized datasets.

\paragraph{Attentional Approach - With Normalization}
\label{sss:Normalized}
In this variant, we normalize each distance vector to have zero mean ($\mu = 0$) and unit variance ($\sigma = 1$), effectively applying z-score standardization. This transformation is equivalent to applying a softmin with an adaptive temperature $\tau_i = \sigma_i$ (its own standard deviation). When $\sigma_i$ is small, the temperature is low, leading to sharper, more peaked (i.e., sparse) coefficient distributions. Conversely, larger $\sigma_i$ results in flatter distributions. Empirically, we observe that most $\sigma_i$ values are small after masking the self-distance, which leads to sparser weights—and, interestingly, improved performance.

\paragraph{Neural Approach}
The third pipeline, justified by Theorem~\ref{thrm:Main}, uses a small MLP to map distance values to adapter weights. It minimizes the MSE between predicted and actual adapter parameters (weights and biases). The MLP used here has three fully connected layers, with the first two followed by layer normalization and ReLU activations.
\begin{align}
h &= \text{ReLU}(\text{Layer Normalization}(W_1 x + b_1)), \quad h \in \mathbb{R}^{4000} \\
\hat{h} &= \text{ReLU}(\text{Layer Normalization}(W_2 h + b_2)), \quad \hat{h} \in \mathbb{R}^{4000} \\
\hat{y} &= W_3 \hat{h} + b_3, \quad \hat{y} \in \mathbb{R}^{1} 
\end{align}
where $x \in \mathbb{R}$ is a single distance value (scalar), $W _1 \in \mathbb{R}^{4000 \times 1}$,  $W_2 \in \mathbb{R}^{4000 \times 4000}$, and $W_3 \in \mathbb{R}^{1 \times 4000}$ are weight matrices, and $b_1 \in \mathbb{R}^{4000}$, $b_2 \in \mathbb{R}^{4000}$, and $b_3 \in \mathbb{R}^{1}$ are bias terms.
We apply the MLP to transform all distance values:
\begin{equation}
w_i(j) = \text{softmin}({\text{MLP}(\delta_i(j)) \mid j \in {1,2,...,N}, j \neq i})
\end{equation}

\subsubsection{Adapter Prediction}
We make our prediction with a straightforward linear combination of existing adapters, weighted by the processed distances:
\begin{equation}
\hat\theta_i = \sum^{N}_{j=1, j\neq i} w_i(j) \theta_j.
\end{equation}

This formulation effectively answers the key question: ``Based on the distances between a new dataset and each of the datasets with known adapters, what proportion of information should the new adapters inherit from each of the fine-tuned (reference) adapters?'' The processed distances serve as coefficients determining the knowledge transfer from each source adapter.

In our study, to make the predictions for all datasets more efficient, we construct a weight (coefficient) matrix $W \in \mathbb{R}^{N \times N}$ where row $i$ contains the processed distances $w_i$, allowing us to compute all predictions simultaneously by leveraging hardware acceleration and vectorization.

\paragraph{Deployment and Inference} Predicted adapters match the size of flattened fine-tuned adapters and can be reshaped to their original structure, ensuring full compatibility with existing LoRA inference pipelines. Once generated, they can be directly loaded for downstream use.

\section{Further Experimental Evaluation}
This appendix presents a detailed account of our experimental observations.

\subsection{Exact Match Evaluation}\label{appendix: exact match}

In addition to Rouge-L, we evaluate our LoRA generation pipelines using the Exact Match (EM) metric, which measures the fraction of test samples for which the model's output exactly matches the expected string. This is a particularly meaningful complement for classification-style tasks common in our dataset corpus, where outputs are short, well-defined, and often categorical. Without any fine-tuning, the Mistral model achieved a score of $0.016 \pm 0.069$. Ideally, if the user had access to GPUs, the GPU fine-tuned models would achieve an average exact match score of $0.654 \pm 0.351$). As shown in Table~\ref{table:ExactMatchPerformance}, each of our pipelines performs substantially better than the base foundation model, but as expected, it does not achieve the same predictive power as LLMs with fine-tuned LoRAs. Additionally, note that we observe a strong correlation between Rouge-L and EM scores across all methods and distance metrics. Both evaluation scores consistently rank the Normalized approach with JS as the top-performing configuration. While Rouge-L captures partial overlap between generated and reference sequences, EM provides a stricter binary signal of correctness. Despite this difference in granularity, the relative performance of the Attentional, Normalized, and Neural approaches remains consistent suggesting that improvements in soft sequence similarity are accompanied by gains in exact prediction accuracy.
% Given the classification nature of many tasks in our experimental corpus, exact match scoring provides a complementary metric to Rouge-L for assessing generation quality. The exact match score is simply 1 for matches and 0 otherwise, based on whether the generated output exactly matches the expected response, resulting in a total score between 0 and 1 (the average of all samples for all datasets).

\begin{table}[hbpt!]
\caption{Exact match performance of our lightweight LoRA prediction pipelines. }
\centering
\small
\setlength{\tabcolsep}{10pt}
\renewcommand{\arraystretch}{1.2}
\adjustbox{width=0.6\linewidth}{
\begin{tabular}{lcccc}
\toprule
Approach & WD & KL & JS & MMD \\
\midrule
Attentional     & $0.288\,{\pm}\,0.297$ & $0.344\,{\pm}\,0.302$ & $0.328\,{\pm}\,0.296$ & $0.327\,{\pm}\,0.295$ \\
Normalized      & $0.338\,{\pm}\,0.296$ & $0.330\,{\pm}\,0.297$ & $0.373\,{\pm}\,0.314$ & $0.340\,{\pm}\,0.298$ \\
Neural          & $0.338\,{\pm}\,0.294$ & $0.323\,{\pm}\,0.295$ & $0.325\,{\pm}\,0.296$ & $0.337\,{\pm}\,0.298$ \\
\bottomrule
\end{tabular}
}
\label{table:ExactMatchPerformance}
\end{table}
\vspace{-0.5em}

% Our results above further prove the effectiveness and the substantial improvements achievable using our methodology, with performance gains of more than 27\% in every setting, and our best-performing approach, the JS-based normalized approach, achieving a score of $0.373$, delivering a remarkable 35.7\% improvement in exact match performance. This achievement is particularly significant since it effectively closes over half of the performance gap between the baseline foundation model and the GPU-fine-tuned version, while requiring only the CPU and no GPU at all. These findings provide compelling additional evidence for the practical viability of our CPU-based LoRA generation methodology.

\subsection{Coefficient Distribution Analysis}

Figures~\ref{fig:coeff_base},~\ref{fig:coeff_normalized}, and~\ref{fig:coeff_mlp} below show the LoRA matrices produced by each approach across each of our datasets.  In each visualization, both the horizontal and vertical axes list the dataset, and each of the $(i,j)^{th}$ pixel darkness indicators how much of the pre-trained LoRA from dataset $i$ is used to predict the LoRA for dataset $j$.  Darker pixels indicate lower coefficients, while brighter ones indicate higher weights assigned to a source adapter for each target dataset. Both axes correspond to dataset indices. Interestingly, the Normalized approach exhibits extreme sparsity: most weights are near zero, and each prediction is dominated by one or two reference adapters, as evidenced by the presence of isolated bright pixels in a largely dark matrix. In contrast, the Neural (MLP) and Attentional methods display greater dispersion in the coefficients, with many rows exhibiting moderate weights across a broader range of source adapters. This reflects a more distributed and nuanced reuse of prior adapters. Given that the Normalized approach exhibits slightly better performance in practice, this visualization may suggest that sparsity is important, but further investigation in follow-up work is encouraged.

% \begin{figure}[H]%[htbp!]
%     \centering
%     \captionsetup[subfigure]{labelformat=parens, labelsep=space, font=small}
%     \begin{subfigure}{}
%         \centering
%         \includegraphics[width=.25\linewidth, trim=0 0 0 0, clip]{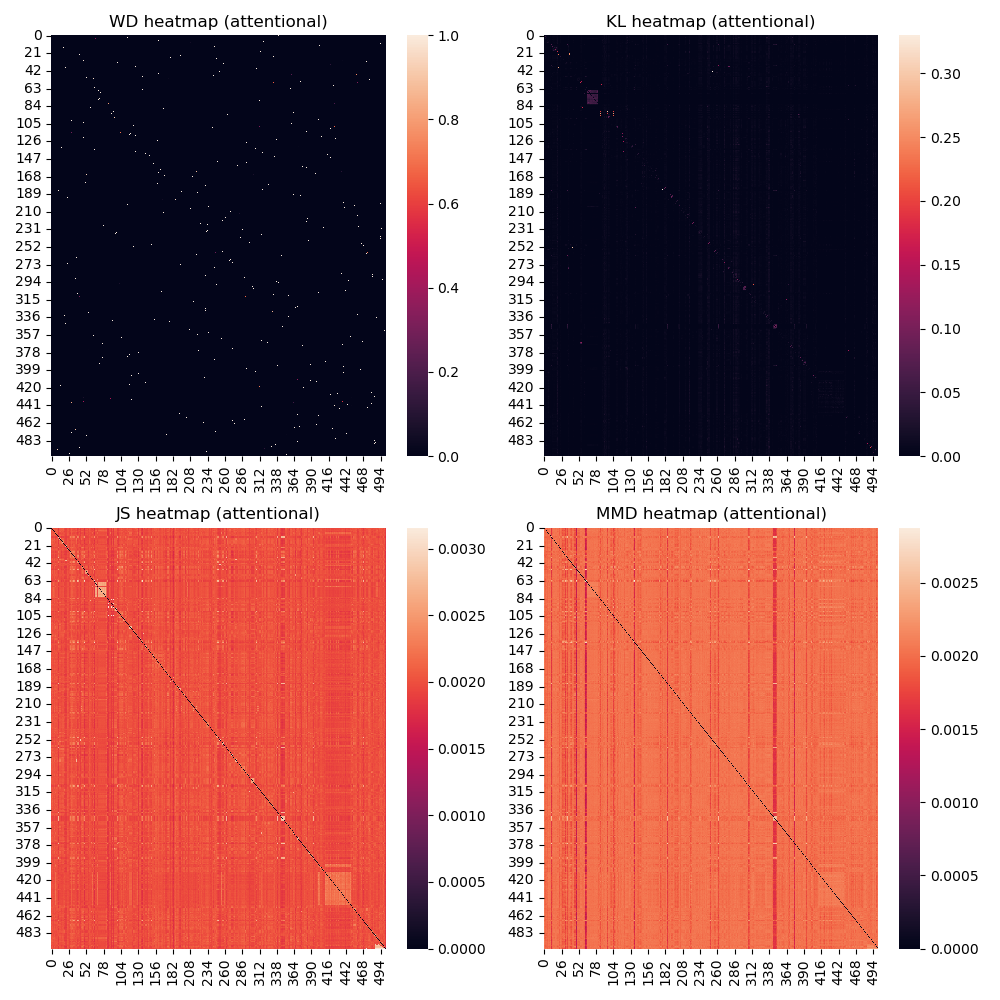}
%         \caption{Attentional}
%         \label{fig:coeff_base}
%     \end{subfigure}
%     \hfill
%     \begin{subfigure}{}
%         \centering
%         \includegraphics[width=.25\linewidth, trim=0 0 0 0, clip]{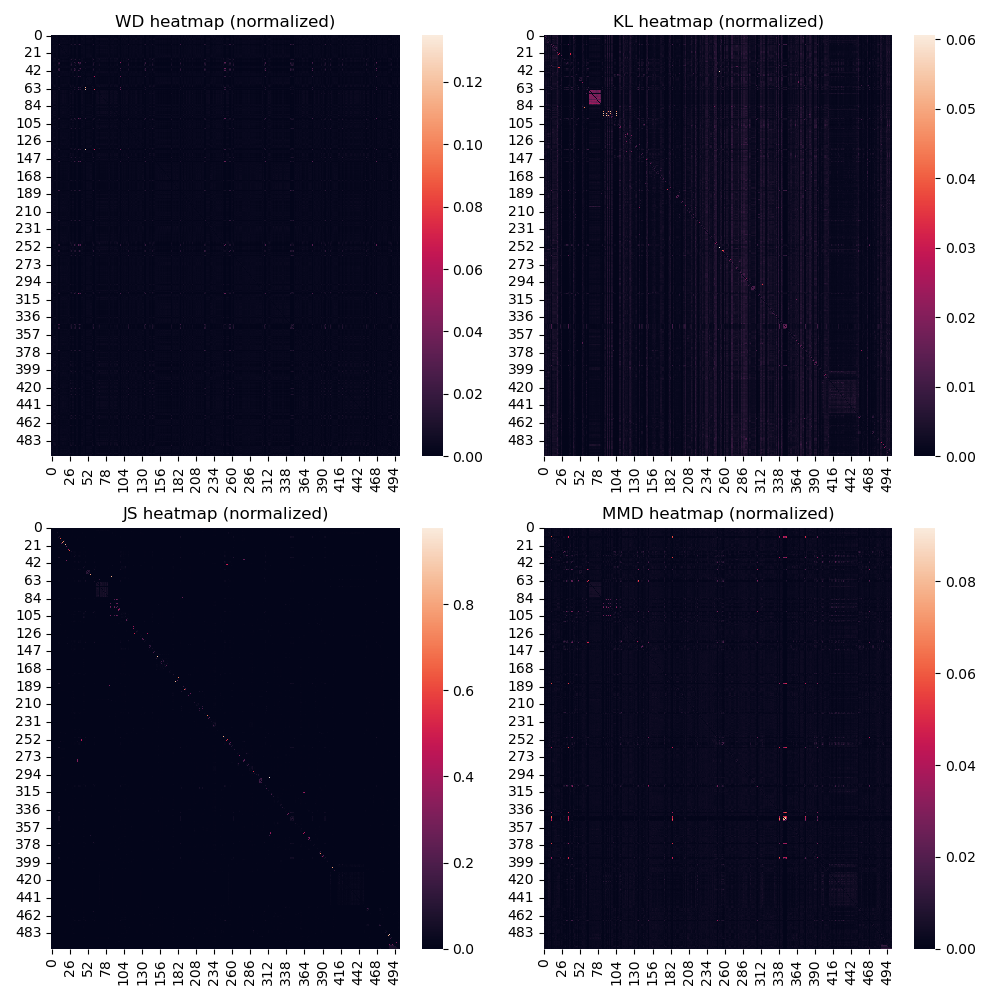}
%         \caption{Normalized}
%         \label{fig:coeff_normalized}
%     \end{subfigure}
%     \hfill
%     \begin{subfigure}{}
%         \centering
%         \includegraphics[width=.25\linewidth, trim=0 0 0 0, clip]{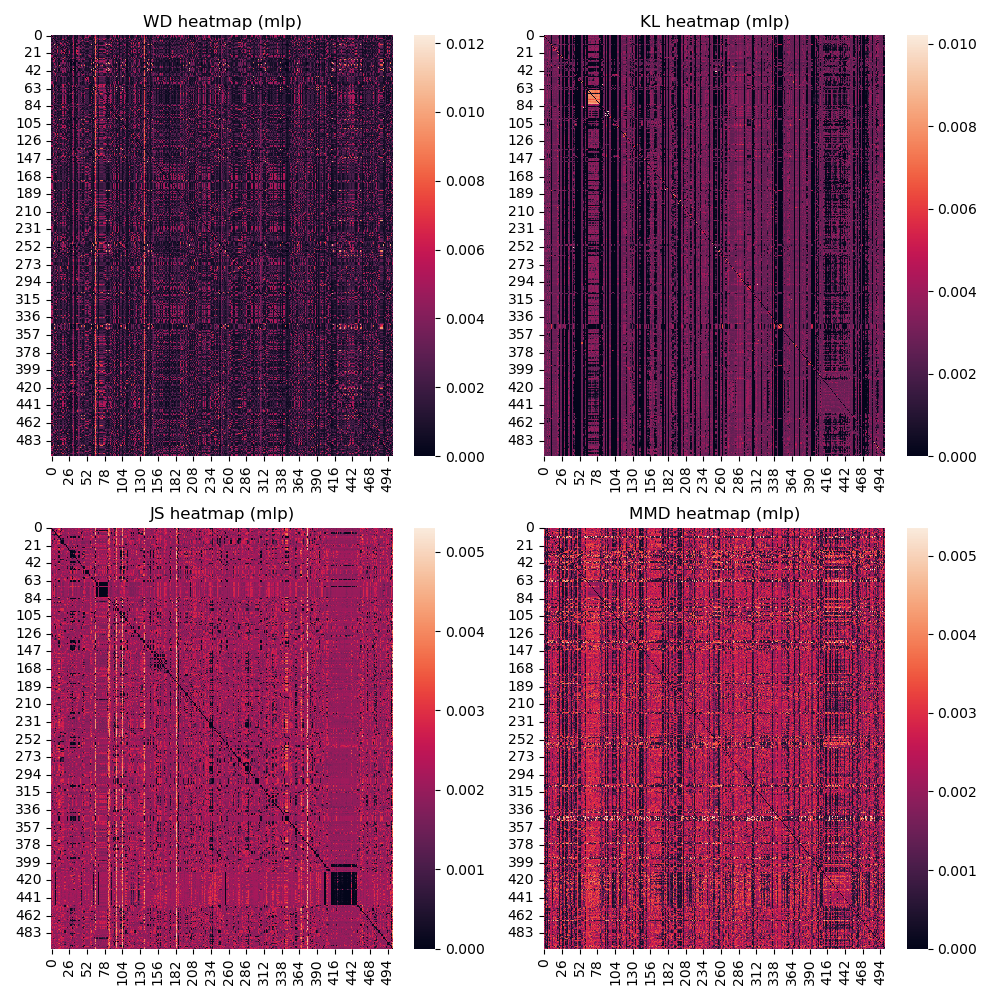}
%         \caption{Neural (MLP)}
%         \label{fig:coeff_mlp}
%     \end{subfigure}
%     \caption{Coefficient distributions for each approach. Each image: Top Left = WD, Top Right = KL, Bottom Left = JS, Bottom Right = MMD.}
%     \label{fig:coeff_all}
% \end{figure}

\begin{figure}[H]


    \centering
    \begin{subfigure}{.32\textwidth}
        \centering
        \includegraphics[width=.95\linewidth, trim=0 0 0 0, clip]{Images/base_version_coefficients_heatmap.png}
        \subcaption{Attentional}
        \label{fig:coeff_base}
    \end{subfigure}%
    \hfill
    \begin{subfigure}{.32\textwidth}
        \centering
        \includegraphics[width=.95\linewidth, trim=0 0 0 0, clip]{Images/normalized_version_coefficients_heatmap.png}
        \subcaption{Normalized}
        \label{fig:coeff_normalized}
    \end{subfigure}%
    \hfill
    \begin{subfigure}{.32\textwidth}
        \centering
        \includegraphics[width=.95\linewidth, trim=0 0 0 0, clip]{Images/mlp_version_coefficients_heatmap.png}
        \subcaption{Neural (MLP)}
        \label{fig:coeff_mlp}
    \end{subfigure}
    
    \caption{Coefficient distributions for each approach. Each image: Top Left = WD, Top Right = KL, Bottom Left = JS, Bottom Right = MMD.}
    \label{fig:coeff_all}
\end{figure}

\subsection{Sample Outputs Generated by Predicted Models}

Below, we have included four randomly selected inputs, expected outputs, and the generated texts by a model with predicted adapters using our best configurations: the JS-divergence-based Normalized approach.

\begin{small}
\begin{tcolorbox}[enhanced,breakable,userstyle,origprompt]
\textbf{Input Prompt [Task ID 671]}: \\
"Definition: In this task, you're given an ambiguous question (which can be answered in more than one way). Your task is to provide one question which clarifies the input question and it has one unique answer, and also provide an answer to the generated question. Generated question and answer should be separated with a new line.\\ \\Positive Example 1 -\\Input: Lucifer how many episodes are in season 3?\\Output: How many episodes of season 3 of Lucifer were there, including bonus episodes? \\ 26\\ \\Positive Example 2 -\\Input: What is the tallest ride at six flags over texas?\\Output: What is the tallest roller coaster at six flags over texas 2001? \\ The Titan\\ \\Negative Example 1 -\\Input: Lucifer how many episodes are in season 3?\\Output: How many episodes of season 3 of Lucifer were there, including bonus episodes? 26\\ \\Negative Example 2 -\\Input: What is the tallest ride at six flags over texas?\\Output: What is the tallest roller coaster at six flags over Texas 2001?\\ \\Now complete the following example -\\Input: When was season 7 of game of thrones released?\\Output: \\ \\"
\end{tcolorbox}

\begin{tcolorbox}[userstyle,botstyle]
\textbf{Expected Answer}: \\
"When was season 7, episode 1 of game of thrones released through simulcast? \\ July 16, 2017"
\end{tcolorbox}

\begin{tcolorbox}[userstyle,rephrasestyle]
\textbf{Model Output}: \\
"When was season 7 of game of thrones released in the US?\\July 16, 2017
\end{tcolorbox}

\vspace{5mm}

\begin{tcolorbox}[enhanced,breakable,userstyle,origprompt]
\textbf{Input Prompt [Task ID 431]}: \\
"Definition: In this task you are given a sentence. You must judge whether the object of the main clause is singular(like: apple) or plural(like: apartments). Label the instances as "Singular" or "Plural" based on your judgment.\\ \\Positive Example 1 -\\Input: They both let out relieved breaths at the doctor\'s words.\\Output: Plural\\ \\Positive Example 2 -\\Input: He signaled the bartender and ordered one more drink.\\Output: Singular\\ \\Negative Example 1 -\\Input: I collect my painting supplies.\\Output: Singular\\ \\Negative Example 2 -\\Input: I saw the ramp leading back toward the surface.\\Output: Plural\\ \\Now complete the following example -\\Input: For some unknown reason, that second kiss wreaked more emotional havoc than the first had.\\Output: \\ \\"
\end{tcolorbox}

\begin{tcolorbox}[userstyle,botstyle]
\textbf{Expected Answer}: \\
"Singular"
\end{tcolorbox}

\begin{tcolorbox}[userstyle,rephrasestyle]
\textbf{Model Output}: \\
"Singular"
\end{tcolorbox}

\vspace{5mm}

\begin{tcolorbox}[enhanced,breakable,userstyle,origprompt]
\textbf{Input Prompt [Task ID 322]}: \\
"Definition: In this task, you are given a public comment from online platforms. You are expected to classify the comment into two classes: threat and non-threat. Threat is a statement that someone will be hurt or harmed, especially if the person does not do something in particular.\\ \\Positive Example 1 -\\Input: Comment: Enough already.  Cut the head off this snake.\\Output: Threat\\ \\Positive Example 2 -\\Input: Comment: Disgusting comment, I pity anyone who has to live with you.\\Output: Non-threat\\ \\Negative Example 1 -\\Input: Comment: Bravo!  Bravo!\\ \\Encore!! Encore!! Encore!!\\Output: Threat\\ \\Negative Example 2 -\\Input: Comment: ... after shooting her with her own gun.\\Output: Non-threat\\ \\Now complete the following example -\\Input: Comment: It's a good thing that right wing illiterates in farms don't read the newspaper.\\Output: \\ \\"
\end{tcolorbox}

\begin{tcolorbox}[userstyle,botstyle]
\textbf{Expected Answer}: \\
"Non-threat"
\end{tcolorbox}

\begin{tcolorbox}[userstyle,rephrasestyle]
\textbf{Model Output}: \\
"Non-threat"
\end{tcolorbox}

\vspace{5mm}

\begin{tcolorbox}[enhanced,breakable,userstyle,origprompt]
\textbf{Input Prompt [Task ID 1398]}: \\
"Definition: Given a fact, create a question that can be answered using the fact. Construct the question such that it is unambiguous, has a unique answer and the answer can be given using the fact.\\ \\Positive Example 1 -\\Input: Fact: deep sea animals live deep in the ocean\\Output: Frilled sharks and angler fish live far beneath the surface of the ocean, which is why they are known as?\\ \\Positive Example 2 -\\Input: Fact: as an object moves , the kinetic energy of that object will increase\\Output: An example of lots of kinetic energy would be?\\ \\Negative Example 1 -\\Input: Fact: water is often brackish in an estuary\\Output: What is the sun made of?\\ \\Negative Example 2 -\\Input: Fact: if a liquid disappears then that liquid probably evaporated\\Output: What happens is water is mopped up?\\ \\Now complete the following example -\\Input: Fact: as the use of a crop increases , the amount of crops planted will increase\\Output: \\ \\"
\end{tcolorbox}

\begin{tcolorbox}[userstyle,botstyle]
\textbf{Expected Answer}: \\
"When the demand for corn rises?"
\end{tcolorbox}

\begin{tcolorbox}[userstyle,rephrasestyle]
\textbf{Model Output}: \\
"Which crop is most likely to be planted in large quantities due to its high demand?"
\end{tcolorbox}
\end{small}

% \vspace{20mm}

% \begin{tcolorbox}[userstyle,origprompt]
% \textbf{Input Prompt [Task ID 671]}: \\
% ""
% \end{tcolorbox}

% \vspace{2mm}

% \begin{tcolorbox}[userstyle,botstyle]
% \textbf{Expected Answer}: \\
% \end{tcolorbox}

% \vspace{2mm}

% \begin{tcolorbox}[userstyle,rephrasestyle]
% \textbf{Model Output}: \\
% \end{tcolorbox}

\end{document}